\def\eqref#1{equation~\ref{#1}}
\def\1{\bm{1}}
\DeclareMathAlphabet{\mathsfit}{\encodingdefault}{\sfdefault}{m}{sl}
\SetMathAlphabet{\mathsfit}{bold}{\encodingdefault}{\sfdefault}{bx}{n}
\newcommand{\R}{\mathbb{R}}
\definecolor{color1}{HTML}{1f77b4}
\definecolor{color2}{HTML}{ff7f0e}
\definecolor{color3}{HTML}{2ca02c}
\definecolor{color4}{HTML}{e4bcad}
\definecolor{color5}{HTML}{76c8c8}
\definecolor{color6}{HTML}{c80064}
\definecolor{color7}{HTML}{3c4e4b}
\definecolor{color8}{HTML}{7f7f7f}
\definecolor{color9}{HTML}{bcbd22}
\definecolor{color10}{HTML}{17becf}
\definecolor{color11}{HTML}{aec7e8}
\definecolor{color12}{HTML}{ffbb78}
\definecolor{color13}{HTML}{98df8a}
\definecolor{color14}{HTML}{ff9896}
\definecolor{color15}{HTML}{c5b0d5}
\definecolor{color16}{HTML}{c49c94}
\definecolor{color17}{HTML}{f7b6d2}
\definecolor{color18}{HTML}{c7c7c7}
\definecolor{color19}{HTML}{dbdb8d}
\definecolor{color20}{HTML}{9edae5}
\definecolor{color21}{HTML}{ad494a}
\pgfplotsset{compat=1.5}
\theoremstyle{plain}
\newtheorem{theorem}{Theorem}[section]
\newtheorem{lemma}[theorem]{Lemma}
\theoremstyle{definition}
\theoremstyle{remark}
\newcommand{\blue}[1]{\textcolor{black}{#1}}
\newcommand{\method}{RandLoRA} 
\title{\method{}: Full-rank parameter-efficient fine-tuning of large models}
\author{%
  \quad Paul Albert 
  \quad Frederic Z. Zhang
  \quad Hemanth Saratchandran \\
  \quad \textbf{Cristian Rodriguez-Opazo}  
  \quad \textbf{Anton van den Hengel}
   \quad \textbf{Ehsan Abbasnejad} \vspace{5pt} \\
   \quad Australian Institute for Machine Learning \quad The University of Adelaide \vspace{3pt} \\
   \quad \texttt{\{firstname.lastname\}@adelaide.edu.au} \\
   \quad {\tt\small \href{https://github.com/PaulAlbert31/RandLoRA}{https://github.com/PaulAlbert31/RandLoRA}}
}
\begin{document}

\maketitle

\begin{abstract}
Low-Rank Adaptation (LoRA) and its variants have shown impressive results in reducing the number of trainable parameters and memory requirements of large transformer networks while maintaining fine-tuning performance. 
The low-rank nature of the weight update inherently limits the representation power of fine-tuned models, however, thus potentially compromising performance on complex tasks.
This raises a critical question: when a performance gap between LoRA and standard fine-tuning is observed, is it due to the reduced number of trainable parameters or the rank deficiency?
This paper aims to answer this question by introducing \method{}, a parameter-efficient method that performs full-rank updates using a learned linear combinations of low-rank, non-trainable random matrices. Our method limits the number of trainable parameters by restricting optimization to diagonal scaling matrices applied to the fixed random matrices. This allows us to effectively overcome the low-rank limitations while maintaining parameter and memory efficiency during training.
Through extensive experimentation across vision, language, and vision-language benchmarks, we systematically evaluate the limitations of LoRA and existing random basis methods.
Our findings reveal that full-rank updates are beneficial across vision and language tasks individually, and even more so for vision-language tasks, where \method{} significantly reduces---and sometimes eliminates---the performance gap between standard fine-tuning and LoRA, demonstrating its efficacy.
\end{abstract}

\section{Introduction}

Large pre-trained models that leverage broad data have demonstrated significantly improved generalization capabilities and remarkable versatility across diverse tasks.
However, the resultant high parameter count also leads to a significant increase in the computational resources required to fine-tune such models on downstream tasks.
To tackle this issue, parameter-efficient fine-tuning (PEFT) approaches such as low-rank adaptation (LoRA)~\citep{2022_ICLR_lora},
draw inspiration from the low intrinsic dimensionality of pre-trained models~\citep{intrinsic_iclr_2018,instrinsic_dimensionality_2021}
and characterize the weight updates as the product of two low-rank matrices,
substantially reducing the number of trainable parameters and memory requirements during training.
This formulation leads to an adaptable number of trainable parameters, as one modifies the rank of the matrices, providing great flexibility under various resource constraints.

In spite of the strong performance of LoRAs in parameter-efficient settings,
our investigation uncovers an accuracy plateau, wherein an increase of rank and thus learnable parameters fail to bridge the accuracy gap with standard fine-tuning.
These undesirable scaling properties\blue{~\citep{2024_ICLR_VeRA}} raise questions about the inherent limitations imposed by the low-rank structure, particularly when tackling complex tasks that benefit from larger parameter counts.
This issue would ideally be addressed by introducing full-rank updates while maintaining the parameter-efficiency.
To this end, we propose \method{}, a PEFT method that leverages a set of linearly-independent random bases in the form of non-trainable low-rank matrices.
By solely learning scaling coefficients for the linear combination of the random low-rank bases, our method achieves full-rank updates, while maintaining low memory usage.
As a result, \method{} strikes a balance between parameter efficiency and full-rank updates, allowing for more flexible and effective fine-tuning.

Through extensive experimentation, we empirically demonstrate the limitations of the low-rank formulation in LoRA, particularly on vision-language tasks, and show how \method{} can improve performance under similar parameter budget.  Figure~\ref{fig:parameterefficient} summarizes our findings across pure vision (DinoV2), vision-language (CLIP) and commonsense reasoning (LLama3-8B), where increasing LoRA's parameter count has highly diminishing returns. \blue{We find that \method{} outperforms LoRA as the parameter budget expands, while remaining parameter efficient thanks to its full-rank update strategy.} We conclude our investigation with an insightful discussion on the distinctive characteristics of \method{} where our analysis reveals that, in contrast to LoRA, \method{} yields activation patterns in deeper layers that closely align with those obtained through full fine-tuning. Furthermore, our visualization of the loss landscape reveals that the local minima reached by \method{} is often closer to that reached by standard fine-tuning, and it always leads to a lower loss than LoRA for an equal parameter count. Additionally, we explore the integration of sparse random bases, where initial findings highlight that sparse bases preserves the performance of \method{}. This suggests promising avenues to further reduce memory and computational requirements when training large transformer models, without compromising model performance.

Our contributions are summarized as:
\begin{enumerate}
    \item We investigate the interplay between rank and number of trainable parameters when fine-tuning large pre-trained models, highlighting the limitations of LoRA in improving performance when larger ranks are required.    
    \item We propose \method{}, a novel parameter-efficient fine-tuning (PEFT) strategy based on random basis combinations, enabling full-rank updates without memory overhead over LoRA.
    \item We rigorously assess \method{} across diverse pre-trained architectures and tasks, spanning pure vision and vision-language image classification to commonsense reasoning, demonstrating its versatility and effectiveness.
\end{enumerate}    

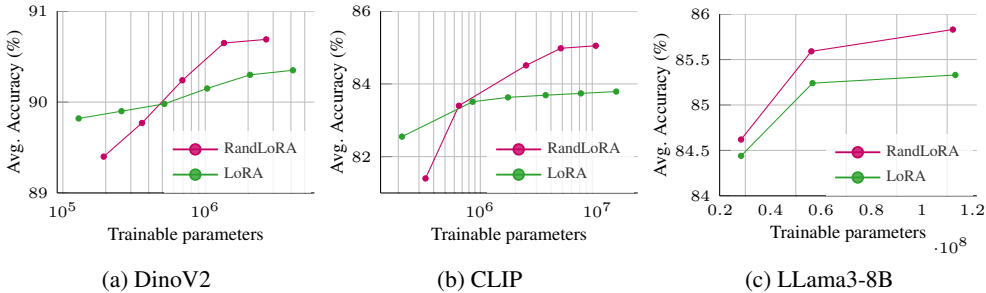
\begin{figure}
    \centering
    \subfloat[DinoV2]{\begin{tikzpicture}
    \begin{axis}[
        width=5cm,
        height=4cm,
        font=\tiny,
        ymin=89,
        ymax=91,
        legend style={draw=none,at={(0.55,1.3)}, 
                    text width=1.3cm,
                    anchor=north,
                    legend columns=6,
                    fill=none,
                nodes={scale=0.8, transform shape},},
        xlabel={\scriptsize{Trainable parameters}},
        ylabel={\scriptsize{Avg. Accuracy (\%)}},
        ymajorgrids=true,
        ytick={89,90,91},
        yticklabel={$\tiny\pgfmathprintnumber{\tick}$},
        ylabel shift=-5pt,
        xlabel shift=-3pt,
        axis x line*=bottom,
        axis y line*=left,  
        cycle list={color1,color2,color3,color4,color5,color6,color7,color8,color9,color21},
        minor x tick num=1,
        xminorgrids,
        minor tick length=0,
        major x tick style = transparent,
        mark size=1.0pt,
        xminorticks=true,
        xmode=log,
        log basis x=10,
        x tick label style={/pgf/number format/.cd,fixed,precision=2},
        ]

    \addplot+[mark=*,mark size=1pt,mark options={fill=color6},color=color6,draw opacity=1] coordinates {(193536, 89.40)(359424, 89.77)(691200, 90.24)(1354752, 90.65)(2681856, 90.69)};

    \addplot+[mark=*,mark size=1pt,mark options={fill=color3},color=color3,draw opacity=1] coordinates {(129024, 89.82)(258048, 89.90)(516096,89.98)(1032192,90.15)(2064384,90.30)(4128768,90.35)};

\end{axis}
    \begin{axis}[
       xmin=0,
       xmax=16,
       ymin=1,
       ymax=2,
       hide axis,
       width=5cm,
       height=5cm,
       font=\tiny,
       mark size=1.0pt,
       legend style={
               at={(0.7,0.0)},
               anchor=south,
               draw=none,
               legend columns=1,
               fill opacity=0.8,
               nodes={scale=1, transform shape},
               cells={align=left},
           },
       legend cell align={left},
   ]

\addplot+ [mark=*, mark size=2pt, mark options={fill=color6},color=color6, line width=0.7pt,solid] coordinates { (0,0) };
\addlegendentry{RandLoRA}

\addplot+ [mark=*, mark size=2pt, mark options={fill=color3},color=color3, line width=0.7pt,solid] coordinates { (0,0) };
\addlegendentry{LoRA}

\end{axis}
\end{tikzpicture}}
    \subfloat[CLIP]{\begin{tikzpicture}
    \begin{axis}[
        width=5cm,
        height=4cm,
        font=\tiny,
        ymin=81,
        ymax=86,
        legend style={draw=none,at={(0.55,1.3)}, 
                    text width=1.3cm,
                    anchor=north,
                    legend columns=6,
                    fill=none,
                nodes={scale=0.8, transform shape},},
        xlabel={\scriptsize{Trainable parameters}},
        ylabel={\scriptsize{Avg. Accuracy (\%)}},
        ymajorgrids=true,
        ytick={80,82,84,86},
        yticklabel={$\tiny\pgfmathprintnumber{\tick}$},
        ylabel shift=-5pt,
        xlabel shift=-3pt,
        axis x line*=bottom,
        axis y line*=left,  
        cycle list={color1,color2,color3,color4,color5,color6,color7,color8,color9,color21},
        minor x tick num=1,
        xminorgrids,
        minor tick length=0,
        major x tick style = transparent,
        mark size=1.0pt,
        xminorticks=true,
        xmode=log,
        log basis x=10,
        x tick label style={/pgf/number format/.cd,fixed,precision=2},
        ]

    \addplot+[mark=*,mark size=1pt,mark options={fill=color6},color=color6,draw opacity=1] coordinates {(341000, 81.4)(657000, 83.4)(2454000, 84.51)(4850000, 84.98)(9642000, 85.05)};

    \addplot+[mark=*,mark size=1pt,mark options={fill=color3},color=color3,draw opacity=1] coordinates {(215000, 82.55)(860000, 83.51)(1720000,83.63)(3605000,83.69)(7209000,83.74)(14418000,83.79)};

\end{axis}
    \begin{axis}[
       xmin=0,
       xmax=16,
       ymin=1,
       ymax=2,
       hide axis,
       width=5cm,
       height=5cm,
       font=\tiny,
       mark size=1.0pt,
       legend style={
               at={(0.7,0.0)},
               anchor=south,
               draw=none,
               legend columns=1,
               fill opacity=0.8,
               nodes={scale=1, transform shape},
               cells={align=left},
           },
       legend cell align={left},
   ]

\addplot+ [mark=*, mark size=2pt, mark options={fill=color6},color=color6, line width=0.7pt,solid] coordinates { (0,0) };
\addlegendentry{RandLoRA}

\addplot+ [mark=*, mark size=2pt, mark options={fill=color3},color=color3, line width=0.7pt,solid] coordinates { (0,0) };
\addlegendentry{LoRA}

\end{axis}
\end{tikzpicture}}
    \subfloat[LLama3-8B]{\begin{tikzpicture}
    \begin{axis}[
        width=5cm,
        height=4cm,
        font=\tiny,
        ymin=84,
        ymax=86,
        legend style={draw=none,at={(0.55,1.3)}, 
                    text width=1.3cm,
                    anchor=north,
                    legend columns=6,
                    fill=none,
                nodes={scale=0.8, transform shape},},
        xlabel={\scriptsize{Trainable parameters}},
        ylabel={\scriptsize{Avg. Accuracy (\%)}},
        ymajorgrids=true,
        ytick={84,84.5,85,85.5,86},
        yticklabel={$\tiny\pgfmathprintnumber{\tick}$},
        ylabel shift=-5pt,
        xlabel shift=-3pt,
        axis x line*=bottom,
        axis y line*=left,  
        cycle list={color1,color2,color3,color4,color5,color6,color7,color8,color9,color21},
        minor x tick num=1,
        xminorgrids,
        minor tick length=0,
        major x tick style = transparent,
        mark size=1.0pt,
        xminorticks=true,
        ]

    \addplot+[mark=*,mark size=1pt,mark options={fill=color6},color=color6,draw opacity=1] coordinates {(28309760, 84.62)(56162560, 85.59)(112262816, 85.83)};

    \addplot+[mark=*,mark size=1pt,mark options={fill=color3},color=color3,draw opacity=1] coordinates {(28311552, 84.44)(56623104, 85.24)(113246208, 85.33)};

\end{axis}
    \begin{axis}[
       xmin=0,
       xmax=16,
       ymin=1,
       ymax=2,
       hide axis,
       width=5cm,
       height=5cm,
       font=\tiny,
       mark size=1.0pt,
       legend style={
               at={(0.7,0.0)},
               anchor=south,
               draw=none,
               legend columns=1,
               fill opacity=0.8,
               nodes={scale=1, transform shape},
               cells={align=left},
           },
       legend cell align={left},
   ]

\addplot+ [mark=*, mark size=2pt, mark options={fill=color6},color=color6, line width=0.7pt,solid] coordinates { (0,0) };
\addlegendentry{RandLoRA}

\addplot+ [mark=*, mark size=2pt, mark options={fill=color3},color=color3, line width=0.7pt,solid] coordinates { (0,0) };
\addlegendentry{LoRA}

\end{axis}
\end{tikzpicture}}
    \caption{LoRA becomes limited by the rank of its update. We train DinoV2 and CLIP to classify 21 image datasets and LLama3-8B to solve 8 commonsense reasoning tasks.}
    \label{fig:parameterefficient}
\end{figure}

\section{Related work\label{sec:related}}

\subsection{Low Rank Adaptation of Large Models}

Low Rank Adaptation (LoRA) of large language models has revolutionized the fine-tuning paradigm, enabling memory-constrained adaptation to specialist tasks and democratizing access to larger models. Initially introduced by~\citep{2022_ICLR_lora}, LoRA leverages the observation that weight updates during fine-tuning can converge to suitable performances without necessitating full rank updates. By factorizing weight updates into the product of two low rank matrices, LoRA achieves a memory-efficient solution for adapting large models. Moreover, once the low rank matrices are merged into the original weight matrix size, no latency is present during inference.
Several improvements have been proposed to build upon LoRA's success. Weight-decomposed LoRAs (DoRA)~\citep{2024_ICML_DoRA} proposes to improve convergence by decomposing LoRA updates into magnitude and direction components. AdaLoRA~\citep{2023_ICLR_AdaLoRA} and AutoLoRA~\citep{2024_arxiv_autolora}, utilize specialized metrics or meta-learning to propose rank-adapted LoRA formulations that dynamically adjust the rank to suit every layer's need. \blue{Other improvements include initialization strategies for the low rank matrices using the truncated SVD of the pre-trained weights and where the whole decomposition is fine-tuned as in Pissa~\citep{2024_NeurIPS_Pissa} or where only the singular value matrix is as in SVFT~\citep{2024_ICMLW_SVFT} or LoRA-XS~\citep{2024_loraxs}. Further improvements are proposed in HydraLoRA~\citep{2024_NeurIPS_HydraLoRA} where the scaling-up matrix of the low rank decomposition is split into multiple ones with a routing layer added to select the contribution of each head. This formulation enhances multi-task learning at the cost of losing the merging capabilities of LoRA in the pre-trained weight at test-time.}
These advancements collectively enhance the efficiency of LoRA, solidifying its position as a cornerstone of large language model fine-tuning.

\subsection{Parameter-Efficient fine-tuning (PEFT) using Random Bases}
Recent research has focused on further reducing the trainable parameter count of LoRA, a crucial aspect for low-shot applications where minimizing trainable parameters can prevent overfitting and enhance generalization. A promising direction involves utilizing random bases combinations, where randomly generated matrices are combined using a limited number of trainable parameters to estimate a weight update.

PRANC~\citep{2023_ICCV_PRANC} pioneered the random base strategy by learning a weighted averaged of random matrices through back-propagation. PRANC's solution averages multiple full size weight matrices for each layer, leading to high memory consumption. To address this, the authors generate random bases on the fly during forward and backward passes using a fixed seed random number generator, reducing memory usage to that of the largest trained layer in the network at the cost of training latency.

Building upon PRANC, NOLA~\citep{2024_ICLR_NoLA} introduces an improved algorithm where random bases are estimated as the product of two low-rank random matrices, each weighed using a learnable scalar and summed before matrix multiplication. This approach effectively approximates a rank 1 LoRA with significantly fewer trainable parameters and largely reduces memory consumption during training over PRANC.

Concurrently, VeRA~\citep{2024_ICLR_VeRA} proposed an alternative strategy utilizing a single high-rank random matrix (typically 256 or 1024), instead of summing multiple rank 1 matrices as in NoLA. VeRA also employs a scaling strategy of random bases distinct from NoLA, detailed in section~\ref{sec:theorymethod}, which relates to our approach. Both NOLA and VeRA achieve comparable performance to LoRA in few-shot fine-tuning scenarios while training substantially fewer parameters.

\subsection{Alternative strategies for parameter-efficient fine-tuning}
We report here on alternatives to weight tuning for parameter-efficient adaptation, specifically focusing on prompt tuning.  Context Optimization (CoOP) \citep{2022_IJCV_CoOP} introduced learnable context vectors for CLIP class names, later generalized to instance-specific prompts in Conditional CoOP (CoCoOP) \citep{2022_CVPR_CoCoOP}. Recent prompt tuning methods, like DePT \citep{2024_CVPR_DEPT} and PromptSRC \citep{2023_CVPR_PromptSRC}, emphasize knowledge preservation by isolating shared subspaces or regularizing prompts.  While parameter-efficient, prompt tuning can struggle with generalization beyond few-shot settings \citep{2024_arxiv_peftsurvey} and may be less effective than LoRA as data increases \citep{2024_CVPR_promptvslora}. We therefore consider prompt tuning orthogonal to weight-tuning for the scope of this paper and exclude it from direct RandLoRA comparisons except for early results found in Appendix \ref{app;deptresults}.

\section{Motivations}

Our literature review reveals that research on improving LoRA is focused on reducing the number of trainable parameters further, either through adaptable ranks or by using fixed or shared low rank projection matrices. When looking at moderate to larger parameter budgets however LoRA remains highly competitive.

We identify that early research has convincingly demonstrated the promise of random basis combinations as a parameter-efficient strategy for large models, particularly in few-shot scenarios. Two approaches have emerged, each representing a distinct paradigm. VeRA advocates for a unique random base with large rank, while NoLA proposes to average a large number of random bases with small ranks. Both approaches report performance comparable to LoRA in few-shot scenarios while converging on a significantly reduced number of trainable parameters. However, as we will demonstrate, this reduction comes at the cost of limited performance when venturing beyond few-shot learning, limiting the scalability of these algorithms.

Finally, we report that LoRA is predicated on the assumption that low-rank updates suffice for fine-tuning large models. We aim in this paper to question the universality of this hypothesis, exploring scenarios where full rank alternatives may be necessary. The fundamental question follows: is parameter efficiency achieved through low-rank approximation limited by (1) the low-rank nature of the update or (2) by the low parameter count. Can parameter-efficient full rank updates provide a more accurate solution ? 
This paper aims to address these questions, exploring the balance between parameter efficiency and low-rank fine-tuning of large transformer models, and shedding light on the limitations of existing approaches.

\section{\method{}---parameter-efficient fine-tuning with full rank\label{sec:theorymethod}}
\subsection{Weight updates as a sum of low-rank matrices}

Let $W_0 \in \mathbb{R}^{D \times d}$ be a weight matrix of a large pre-trained model.
Fine-tuning aims to find an appropriate $\Delta W \in \mathbb{R}^{D\times d}$,
such that the fine-tuned weights $W_0 + \Delta W$ lead to an adapted model, tailored to a specific downstream task.
Without loss of generality, let us assume $d < D$.
The motivation behind \method{} stems from the singular value decomposition (SVD) of $\Delta W$, i.e., $\Delta W=U\Sigma V^\mathsf{T}$, where $U \in \mathbb{R}^{D \times d}$, $\Sigma \in \mathbb{R}^{d \times d}$, $V \in \mathbb{R}^{d \times d}$.
This decomposition can be written as the sum of the product of rank-one matrices, as follows
\begin{equation}
 \Delta W = \sum_{i=1}^d \mathbf{u}_i\sigma_i \mathbf{v}_i^\mathsf{T},
\label{eqn;svd_rank1_decomp}
\end{equation}
where $\mathbf{u}_i$ and $\mathbf{v}_i$ denote the columns of $U$ and $V$, respectively.
We suggest that in this context, low-rank updates such as LoRAs can be characterized as an approximation of the few largest singular values while the rest of the information in $\Delta W$ being discarded.
To better illustrate this point, let us denote the rank of LoRA by $r$ and for brevity of exposition, assume $d$ is divisible by $r$.
We rewrite~\eqref{eqn;svd_rank1_decomp} as a sum of the product of rank-$r$ matrices, as follows
\begin{align}\label{eqn;svd_rankk_decomp}
\Delta W &= \sum_{j=1}^{n} U_j \Sigma_j V_j^\mathsf{T}, 
\end{align}
where $U_j \Sigma_j V_j^\mathsf{T} = \sum_{i=rj}^{r(j+1)} \mathbf{u}_i\sigma_i \mathbf{v}_i^\mathsf{T}$ and where $n = d / r$.
This formulation reveals how LoRA models the approximates the first low-rank partition $U_1 \Sigma_1 V_1^\mathsf{T}$, and implicitly assumes $\sum_{j=2}^{n} U_j \Sigma_j V_j^\mathsf{T} \approx 0$.
We however argue that the remaining $n-1$ terms can play a crucial role when capturing more complex task-specific variations that require larger deviations from the pre-trained weight $W_0$.

\subsection{Parameter-efficient approximation of low-rank matrices\label{sec:weightupdate}}

Approximating more terms in the decomposition of $\Delta W$ using LoRA's formulation quickly becomes parameter inefficient, culminating to $Dd + d^2$ parameters for a full rank $d$ in place of the original $Dd$ parameters of $\Delta W$. To perform full-rank updates while maintaining parameter-efficiency, we propose instead to approximate each term of $\Delta W$ in \eqref{eqn;svd_rankk_decomp} using low-rank random bases where only scaling coefficients are learned,
\begin{align}
  \Delta W = \sum_{j=1}^n B_j \Lambda_j A_j \Gamma_j, \label{eq:Wfirst}
\end{align}
where $B_j \in \mathbb{R}^{D \times r}$ and $A_j \in \mathbb{R}^{r \times d}$ are non-trainable, random matrices.
The two learnable diagonal scaling matrices, $\Lambda_j \in \mathbb{R}^{r \times r}$ and $\Gamma_j \in \mathbb{R}^{d \times d}$ are unique to each of the $n$ terms and fulfill complementary roles to improve the approximation.
We aim for $A_j\Gamma_j$ transform the input features into an low-dimensional space (rank-$r$), $\Lambda_j$ to scale the compressed features which are then transformed back into the desired output space by $B_j$.\footnote{The formulation of our method is similar to that of VeRA~\citep{2024_ICLR_VeRA}, which will be discussed in detail in section~\ref{sec:diffvera}.}
Since $\Gamma_j$ operates on the column space of $A_j$ and is unique to each $A_j$,
we use a unique shared matrix $A \in \mathbb{R}^{r \times d}$ across all $n$ terms without loss of expressivity but reducing memory consumption.
With a shared $A$, we formulate the update as 
\begin{align}
  \Delta W = \sum_{j=1}^n B_j \Lambda_j A \Gamma_j. \label{eq:W}
\end{align}
To achieve a full-rank update, we set $n=d/r$, leading to $\frac{d}{r}(d + r) = d^2/r + d$ learnable parameters.
Note that unlike LoRA, the number of learnable parameters is inversely proportional to the rank of the random bases in \method{}, as increasing the rank of the bases leads to a reduction in trainable parameters while maintaining full rank.
In summary, \method{} trades-off approximation accuracy for scope, sacrificing a more precise representation of the individual SVD elements of $\Delta W$ to capture a larger portion of its singular value decomposition.

\subsection{Convergence analysis\label{sec:maths}}
In this section, we present a theorem showing that weight updates using \method{} is an accurate approximation of general matrices under certain theoretical conditions.

\begin{theorem}\label{thm;random_sum_est_main1}
Let $W$ be a fixed $D \times d$ matrix, with $D > d$ and $\text{rank}(W) = d$.
Fix $1 \leq n \leq d$, such that $d = nr$.
The matrix $W$ can be factorized using SVD as
\begin{equation}
    W = \sum_j^{n} U_j \Sigma_j V_j^\mathsf{T},
\end{equation}
where $U_j \in \mathbb{R}^{D\times r}$, $V_j \in \mathbb{R}^{r \times d}$ are partitions of the left and right singular vectors,
and $\Sigma_j \in \mathbb{R}^{r \times r}$ contains r singular values.
For each $1 \leq j \leq n$, let $B_j$ denote a random $D\times r$ matrix whose entries are drawn i.i.d from either a Gaussian or uniform distribution, $A_j$ denotes an 
$r \times d$ matrix whose entries are drawn similarly, 
$\Lambda_j$ is a diagonal $r \times r$ matrix and $\Gamma_j$ is a diagonal 
$d \times d$ matrix drawn similarly. Assume 
\begin{equation}\label{eqn;frob_est_assump1}
    \lVert U_j\Sigma_jV^\mathsf{T}_j - 
    B_j\Lambda_jA_j\Gamma_j\rVert_F \leq \epsilon
\end{equation}
for each $1 \leq j \leq n$ for some $0 < \epsilon$.
Then we have that with probability $1$ that each $B_j\Lambda_jA_j\Gamma_j$ has full rank and
\begin{equation}
    \left\lVert W - \sum_{j=1}^nB_j\Lambda_jA_j\Gamma_j \right\rVert_F \leq 
    n \cdot \epsilon.
\end{equation}
\end{theorem}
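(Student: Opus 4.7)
The statement decomposes neatly into two independent claims: (i) an almost-sure rank statement about each summand $B_j\Lambda_jA_j\Gamma_j$, and (ii) a Frobenius-norm bound on the total approximation error. My plan is to address these separately, dispatching (ii) by the triangle inequality and (ii) by a standard measure-theoretic argument for continuously distributed random matrices.

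For the error bound, I would simply subtract the SVD decomposition $W = \sum_{j=1}^n U_j\Sigma_jV_j^\mathsf{T}$ from $\sum_{j=1}^n B_j\Lambda_jA_j\Gamma_j$ term by term and invoke the triangle inequality for $\lVert\cdot\rVert_F$:
\begin{equation}
\left\lVert W - \sum_{j=1}^n B_j\Lambda_jA_j\Gamma_j \right\rVert_F
\;=\; \left\lVert \sum_{j=1}^n \bigl(U_j\Sigma_jV_j^\mathsf{T} - B_j\Lambda_jA_j\Gamma_j\bigr)\right\rVert_F
\;\leq\; \sum_{j=1}^n \lVert U_j\Sigma_jV_j^\mathsf{T} - B_j\Lambda_jA_j\Gamma_j \rVert_F.
\end{equation}
The hypothesis \eqref{eqn;frob_est_assump1} then gives each term a bound of $\epsilon$, yielding the claimed $n\cdot\epsilon$. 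This step is essentially mechanical and the only subtlety is remembering that the bound is deterministic once the random matrices are fixed, so it holds for any realization satisfying the assumption.

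For the rank claim, the idea is that the set of rank-deficient matrices in a given dimension forms a proper algebraic subvariety (cut out by the vanishing of appropriate minors), hence has Lebesgue measure zero. Since $B_j$ and $A_j$ have i.i.d.\ entries from a continuous distribution (Gaussian or uniform), each lies in the full-rank open set with probability $1$; similarly $\Lambda_j$ and $\Gamma_j$ have nonzero diagonal entries almost surely, so they are invertible. The product $B_j\Lambda_jA_j\Gamma_j$ is then $(D\times r)\cdot(r\times r)\cdot(r\times d)\cdot(d\times d)$ with the outer factors of full column/row rank $r$ and the inner diagonals invertible, so the product has rank exactly $r$ almost surely. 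A countable intersection of the $n$ probability-one events retains probability $1$.

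I do not anticipate a major obstacle: the triangle-inequality step is one line, and the almost-sure rank statement is a standard fact. The one place to be careful is the meaning of ``full rank'' for a product whose maximum possible rank is $r$ (not $\min(D,d)$); I would make this explicit in the write-up to avoid confusion, and would also note that the deterministic assumption~\eqref{eqn;frob_est_assump1} is not in tension with the almost-sure statement because the probability-one event on which the ranks are maximal does not depend on $\epsilon$.
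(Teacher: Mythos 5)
Your proposal is correct and follows essentially the same route as the paper: the error bound is the identical term-by-term triangle-inequality argument, and the almost-sure rank claim rests on the same measure-theoretic fact (continuous distributions assign zero probability to rank-deficient configurations) that the paper packages into its lemma on linear independence of i.i.d.\ Gaussian or uniform vectors. Your explicit remark that ``full rank'' for the product $B_j\Lambda_jA_j\Gamma_j$ means rank $r$ rather than $\min(D,d)$ is a worthwhile clarification that the paper leaves implicit.
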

For details on the proof of theorem \ref{thm;random_sum_est_main1} please refer to appendix \ref{app:proof_theo}.

Theorem~\ref{thm;random_sum_est_main1} is premised on $B_j\Lambda_jA_j\Gamma_j$ being a good approximation for the $r$-truncated singular value of $\Delta W$, which is shown to be true empirically in VeRA~\citep{2024_ICLR_VeRA} for example.
We show in this case that $\Delta W$ can be accurately approximated as $\sum_{j=1}^nB_j\Lambda_jA_j\Gamma_j$, motivating \method{}'s formulation.
In contrast, since the best approximation a rank-$r$ LoRA can achieve is the $r$-truncated SVD of $W$, then by Eckart-Young-Mirsky theorem, the Frobenius norm of the difference between $W$ and low-rank adaptation $BA$ is lower bounded as follows

\begin{align}
\lVert W - BA\rVert_F &\geq \left\lVert W - \sum_{i=1}^r \mathbf{u}_i \sigma_i \mathbf{v}_i^\mathsf{T} \right\rVert_F = \sum_{i=r+1}^d\sigma_i^2.  
\label{eq:lora_bound}
\end{align}

We conclude that while LoRA's rank $r$ approximation is limited by the sum of the last $d-r-1$ squared singular values of $W$, \method{} does not present this low bound and is only limited by how close ($\epsilon$) can $B_j\Lambda_j A_j\Gamma_j$ approximate length-r segments of the SVD of $W$.

\section{Experiments\label{sec:exp}}
\subsection{Experimental Settings}

We conduct a comprehensive comparison with three state-of-the-art approaches: LoRA~\citep{2022_ICLR_lora}, NoLA~\citep{2024_ICLR_NoLA}, and VeRA~\citep{2024_ICLR_VeRA}. We perform a hyper-parameter search to identify optimal settings for LoRA, NoLA, VeRA, and RandLoRA to ensure a fair comparison. More details about the experimental settings can be found in appendix~\ref{sec:appimplem}. \blue{Additional experiments on the General Language Understanding Evaluation (GLUE)~\citep{2019_ICLR_glue} and End-to-end (E2E)~\cite{2017_ACL_e2e} natural language generation benchmarks as well as further comparison with prompt-tuning algorithms are available in appendix~\ref{app;resultsglue}}.

\subsection{Vision: DinoV2 and CLIP's vision backbone\label{sec:expvision}}
We evaluate fine-tuning vision backbones for image classification using pre-trained ViT-B/14 DinoV2~\citep{2023_arXiv_dinov2} and ViT-B/32, ViT-L/14 CLIP~\citep{2021_ICML_CLIP} vision only backbones. We fine-tune on 21 datasets (Appendix~\ref{app:classifdatasets}, Table~\ref{tab:datasets}) and evaluate \{1, 2, 4, 16\}-shot learning and performance with 50\% and 100\% training data.

We compare \method{} to LoRA rank 32 where \method{}'s rank is adjusted to match LoRA's parameters, and include VeRA and NoLA as random base alternatives. We fine-tune the vision backbones and learn linear classifiers for DinoV2, or use frozen CLIP language embeddings for classification. Results are displayed in Figure~\ref{fig:results-clip-vision} where we also report VRAM usage, detailed results are available in Appendix~\ref{app:dinoresults}.

We find that LoRA exhibits a smaller accuracy gap with standard fine-tuning (FT) on DinoV2 than CLIP.  With equal parameters, \method{} improves over LoRA, bridging the FT gap in both cases. We believe that LoRA's success on the DinoV2 backbone is partly explained by its training objective (see Section~\ref{sec:simactiv}). \method{} demonstrates LoRA's rank limitation for CLIP architectures and the benefit of full-rank updates in matching FT performance. VeRA and NoLA are efficient in few-shot settings but become limited with more data. 

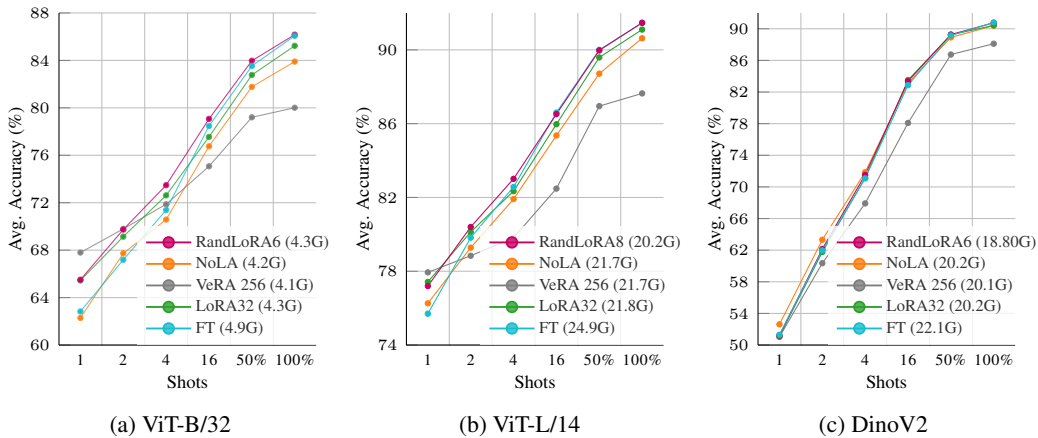
\begin{figure}
    \centering
    \subfloat[ViT-B/32]{\begin{tikzpicture}
    \begin{axis}[
        width=5cm,
        height=6cm,
        font=\tiny,
        ymin=60,
        ymax=88,
        legend style={draw=none,at={(0.55,1.3)}, 
                    text width=1.3cm,
                    anchor=north,
                    legend columns=6,
                    fill=none,
                nodes={scale=0.8, transform shape},},
        xlabel={\scriptsize{Shots}},
        ylabel={\scriptsize{Avg. Accuracy (\%)}},
        ymajorgrids=true,
        ytick={60,64,68,72,76,80,84,88},
        yticklabel={$\tiny\pgfmathprintnumber{\tick}$},
        ylabel shift=-5pt,
        xlabel shift=-3pt,
        axis x line*=bottom,
        axis y line*=left,  
        symbolic x coords={1,2,4,16,50\%,100\%},
        cycle list={color1,color2,color3,color4,color5,color6,color7,color8,color9,color21},
        xtick={1,2,4,16,50\%,100\%},
        xticklabel={\textsc{\tick}},
        minor x tick num=1,
        xminorgrids,
        minor tick length=0,
        major x tick style = transparent,
        mark size=1.0pt,
        ]

\addplot+[mark=*,mark size=1pt,mark options={fill=color2},color=color2,draw opacity=0.8] coordinates {(1, 62.29)(2, 67.73)(4, 70.58)(16, 76.76)(50\%, 81.77)(100\%, 83.90)};

\addplot+[mark=*,mark size=1pt,mark options={fill=color8},color=color8,draw opacity=0.8] coordinates {(1, 67.80)(2, 69.79)(4, 71.87)(16, 75.07)(50\%, 79.20)(100\%, 80.01)};

\addplot+[mark=*,mark size=1pt,mark options={fill=color3},color=color3,draw opacity=0.8] coordinates {(1, 65.45)(2, 69.14)(4, 72.62)(16, 77.55)(50\%, 82.77)(100\%, 85.23)};

\addplot+[mark=*,mark size=1pt,mark options={fill=color6},color=color6,draw opacity=0.8] coordinates {(1, 65.51)(2, 69.74)(4, 73.48)(16, 79.07)(50\%, 83.97)(100\%, 86.17)};

\addplot+[mark=*,mark size=1pt,mark options={fill=color10},color=color10,draw opacity=0.8] coordinates {(1, 62.83)(2, 67.19)(4, 71.38)(16, 78.47)(50\%, 83.53)(100\%, 86.07)};

\end{axis}
    \begin{axis}[
       xmin=0,
       xmax=16,
       ymin=1,
       ymax=2,
       hide axis,
       width=5cm,
       height=5cm,
       font=\small,
       mark size=1.0pt,
       legend style={
               at={(0.7,0.0)},
               anchor=south,
               draw=none,
               legend columns=1,
               fill opacity=0.8,
               nodes={scale=0.7, transform shape},
               cells={align=left},
           },
       legend cell align={left},
   ]

\addplot+ [mark=*, mark size=2pt, mark options={fill=color6},color=color6, line width=0.7pt,solid] coordinates { (0,0) };
\addlegendentry{RandLoRA6 (4.3G)}

\addplot+ [mark=*, mark size=2pt, mark options={fill=color2},color=color2, line width=0.7pt,solid] coordinates { (0,0) };
\addlegendentry{NoLA (4.2G)}

\addplot+ [mark=*, mark size=2pt, mark options={fill=color8},color=color8, line width=0.7pt,solid] coordinates { (0,0) };
\addlegendentry{VeRA 256 (4.1G)}

\addplot+ [mark=*, mark size=2pt, mark options={fill=color3},color=color3, line width=0.7pt,solid] coordinates { (0,0) };
\addlegendentry{LoRA32 (4.3G)}

\addplot+ [mark=*, mark size=2pt, mark options={fill=color10},color=color10, line width=0.7pt,solid] coordinates { (0,0) };
\addlegendentry{FT (4.9G)}

\end{axis}
\end{tikzpicture}}   
    \subfloat[ViT-L/14]{\begin{tikzpicture}
    \begin{axis}[
        width=5cm,
        height=6cm,
        font=\tiny,
        ymin=74,
        ymax=92,
        legend style={draw=none,at={(0.55,1.3)}, 
                    text width=1.3cm,
                    anchor=north,
                    legend columns=6,
                    fill=none,
                nodes={scale=0.8, transform shape},},
        xlabel={\scriptsize{Shots}},
        ylabel={\scriptsize{Avg. Accuracy (\%)}},
        ymajorgrids=true,
        ytick={74, 78, 82, 86, 90},
        yticklabel={$\tiny\pgfmathprintnumber{\tick}$},
        ylabel shift=-5pt,
        xlabel shift=-3pt,
        axis x line*=bottom,
        axis y line*=left,  
        symbolic x coords={0,1,2,4,16,50\%,100\%},
        cycle list={color1,color2,color3,color4,color5,color6,color7,color8,color9,color21},
        xtick={0,1,2,4,16,50\%,100\%},
        xticklabel={\textsc{\tick}},
        minor x tick num=1,
        xminorgrids,
        minor tick length=0,
        major x tick style = transparent,
        mark size=1.0pt,
        ]

\addplot+[mark=*,mark size=1pt,mark options={fill=color2},color=color2,draw opacity=1] coordinates {(1, 76.26)(2, 79.28)(4, 81.91)(16, 85.36)(50\%, 88.71)(100\%, 90.63)};

\addplot+[mark=*,mark size=1pt,mark options={fill=color8},color=color8,draw opacity=1] coordinates {(1, 77.94)(2, 78.84)(4, 79.82)(16, 82.48)(50\%, 86.95)(100\%, 87.64)};

\addplot+[mark=*,mark size=1pt,mark options={fill=color3},color=color3,draw opacity=1] coordinates {(1, 77.42)(2, 80.13)(4, 82.34)(16, 85.97)(50\%, 89.59)(100\%, 91.10)};

\addplot+[mark=*,mark size=1pt,mark options={fill=color10},color=color10,draw opacity=1] coordinates {(1, 75.70)(2, 79.82)(4, 82.57)(16, 86.60)(50\%, 90.00)(100\%, 91.46)};

\addplot+[mark=*,mark size=1pt,mark options={fill=color6},color=color6,draw opacity=1] coordinates {(1, 77.20)(2, 80.40)(4, 83.01)(16, 86.52)(50\%, 89.97)(100\%, 91.47)};

\end{axis}
    \begin{axis}[
       xmin=0,
       xmax=16,
       ymin=1,
       ymax=2,
       hide axis,
       width=5cm,
       height=5cm,
       font=\small,
       mark size=1.0pt,
       legend style={
               at={(0.7,0.0)},
               anchor=south,
               draw=none,
               legend columns=1,
               fill opacity=0.8,
               nodes={scale=0.7, transform shape},
               cells={align=left},
           },
       legend cell align={left},
   ]

\addplot+ [mark=*, mark size=2pt, mark options={fill=color6},color=color6, line width=0.7pt,solid] coordinates { (0,0) };
\addlegendentry{RandLoRA8 (20.2G)}

\addplot+ [mark=*, mark size=2pt, mark options={fill=color2},color=color2, line width=0.7pt,solid] coordinates { (0,0) };
\addlegendentry{NoLA (21.7G)}

\addplot+ [mark=*, mark size=2pt, mark options={fill=color8},color=color8, line width=0.7pt,solid] coordinates { (0,0) };
\addlegendentry{VeRA 256 (21.7G)}

\addplot+ [mark=*, mark size=2pt, mark options={fill=color3},color=color3, line width=0.7pt,solid] coordinates { (0,0) };
\addlegendentry{LoRA32 (21.8G)}

\addplot+ [mark=*, mark size=2pt, mark options={fill=color10},color=color10, line width=0.7pt,solid] coordinates { (0,0) };
\addlegendentry{FT (24.9G)}

   \end{axis}
\end{tikzpicture}}    
    \subfloat[DinoV2]{\begin{tikzpicture}
    \begin{axis}[
        width=5cm,
        height=6cm,
        font=\tiny,
        ymin=50,
        ymax=92,
        legend style={draw=none,at={(0.55,1.3)}, 
                    text width=1.3cm,
                    anchor=north,
                    legend columns=6,
                    fill=none,
                nodes={scale=0.8, transform shape},},
        xlabel={\scriptsize{Shots}},
        ylabel={\scriptsize{Avg. Accuracy (\%)}},
        ymajorgrids=true,
        ytick={50,54,58,62,66,70,74,78,82,86,90,94},
        yticklabel={$\tiny\pgfmathprintnumber{\tick}$},
        ylabel shift=-5pt,
        xlabel shift=-3pt,
        axis x line*=bottom,
        axis y line*=left,  
        symbolic x coords={1,2,4,16,50\%,100\%},
        cycle list={color1,color2,color3,color4,color5,color6,color7,color8,color9,color21},
        xtick={1,2,4,16,50\%,100\%},
        xticklabel={\textsc{\tick}},
        minor x tick num=1,
        xminorgrids,
        minor tick length=0,
        major x tick style = transparent,
        mark size=1.0pt,
        ]

\addplot+[mark=*,mark size=1pt,mark options={fill=color2},color=color2,draw opacity=1] coordinates {(1, 52.62)(2, 63.32)(4, 71.86)(16, 83.13)(50\%, 88.91)(100\%, 90.40)};

\addplot+[mark=*,mark size=1pt,mark options={fill=color8},color=color8,draw opacity=1] coordinates {(1, 51.03)(2, 60.35)(4, 67.91)(16, 78.09)(50\%, 86.74)(100\%, 88.11)};

\addplot+[mark=*,mark size=1pt,mark options={fill=color3},color=color3,draw opacity=1] coordinates {(1, 51.12)(2, 61.77)(4, 71.13)(16, 83.51)(50\%, 89.24)(100\%, 90.43)};

\addplot+[mark=*,mark size=1pt,mark options={fill=color6},color=color6,draw opacity=1] coordinates {(1, 51.22)(2, 62.15)(4, 71.52)(16, 83.38)(50\%, 89.27)(100\%, 90.76)};

\addplot+[mark=*,mark size=1pt,mark options={fill=color10},color=color10,draw opacity=1] coordinates {(1, 51.30)(2, 61.97)(4, 71.08)(16, 82.87)(50\%, 89.16)(100\%, 90.79)};

\end{axis}
    \begin{axis}[
       xmin=0,
       xmax=16,
       ymin=1,
       ymax=2,
       hide axis,
       width=5cm,
       height=5cm,
       font=\small,
       mark size=1.0pt,
       legend style={
               at={(0.7,0.0)},
               anchor=south,
               draw=none,
               legend columns=1,
               fill opacity=0.8,
               nodes={scale=0.7, transform shape},
               cells={align=left},
           },
       legend cell align={left},
   ]

\addplot+ [mark=*, mark size=2pt, mark options={fill=color6},color=color6, line width=0.7pt,solid] coordinates { (0,0) };
\addlegendentry{RandLoRA6 (18.80G)}

\addplot+ [mark=*, mark size=2pt, mark options={fill=color2},color=color2, line width=0.7pt,solid] coordinates { (0,0) };
\addlegendentry{NoLA (20.2G)}

\addplot+ [mark=*, mark size=2pt, mark options={fill=color8},color=color8, line width=0.7pt,solid] coordinates { (0,0) };
\addlegendentry{VeRA 256 (20.1G)}

\addplot+ [mark=*, mark size=2pt, mark options={fill=color3},color=color3, line width=0.7pt,solid] coordinates { (0,0) };
\addlegendentry{LoRA32 (20.2G)}

\addplot+ [mark=*, mark size=2pt, mark options={fill=color10},color=color10, line width=0.7pt,solid] coordinates { (0,0) };
\addlegendentry{FT (22.1G)}

\end{axis}
\end{tikzpicture}}
    \caption{Tuning CLIP and DinoV2 vision encoders for image classification. Accuracy averaged over 21 datasets. We additionally report max GPU VRAM usage during training.}
    \label{fig:results-clip-vision}
\end{figure}

\subsection{Vision-Language: CLIP}
We extend in this section our experimental setting to fine-tuning CLIP-like transformer architectures on classification datasets where contrary to section~\ref{sec:expvision} both the language and vision encoders of CLIP are trained. We add ImageNet~\citep{2012_NeurIPS_ImageNet} to the dataset pool to scale up to 22 classification datasets. To assess the effectiveness of \method{} compared to LoRA on models of varying sizes, we consider three variants of pre-trained CLIPs from the open-clip repository~\citep{2023_CVPR_openclip}: ViT-B/32 (151M parameters), ViT-L/14 (428M parameters) and ViT-H/14 (1B parameters). We scale the rank of the random bases in \method{} in the same way as section~\ref{sec:expvision} to maintain a number of parameters comparable to a rank 32 LoRA: \method{}-\{6,8,10\} for ViT-\{B/32,L/14,H/14\} respectively.

A summary of results is available in Figure~\ref{fig:results-clip} with detailed results being available in appendix~\ref{app:clipresults}. \blue{Because fine-tuning vision-language architectures such as CLIP is a harder optimization problem,} we observe the existence of a larger performance gap between full fine-tuning and LoRA than for pure vision, which we confirm is not bridged by increasing the rank of LoRA (see Figure~\ref{fig:parameterefficient}). This suggests that increasing parameter count is not enough, pointing towards the rank of the update as the possible limit to the performance of LoRA. When running \method{} with the same amount of trainable parameters, we observe that the gap with fine-tuning is bridged. When compared with NoLA and VeRA we come to the same conclusions as section~\ref{sec:expvision} although VeRA is this time much more competitive for larger data budgets, hinting towards the importance of high ranks for finetuning CLIP-like vision language architectures. We also report that our base sharing strategy allows \method{} to decrease VRAM usage over LoRA which can be relevant for large architectures such as ViT-H/14.

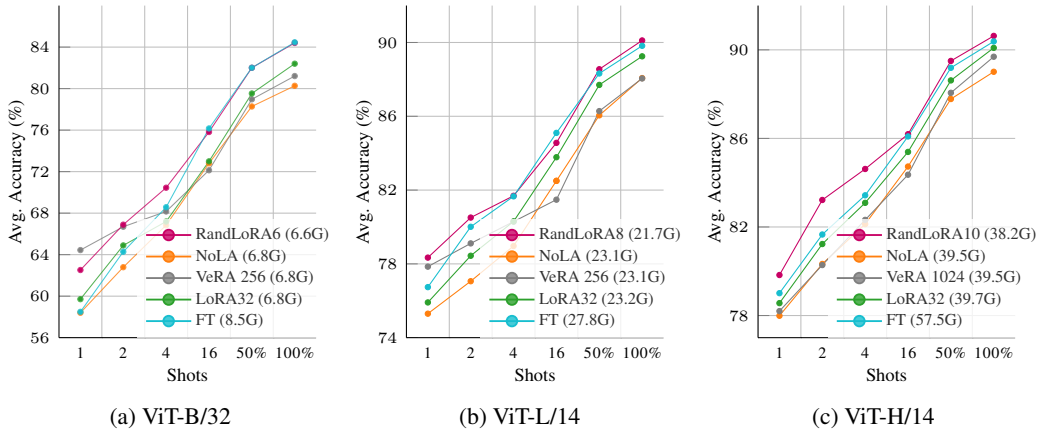
\begin{figure}
    \centering
    \subfloat[ViT-B/32]{\begin{tikzpicture}
    \begin{axis}[
        width=5cm,
        height=6cm,
        font=\tiny,
        ymin=56,
        ymax=88,
        legend style={draw=none,at={(0.55,1.3)}, 
                    text width=1.3cm,
                    anchor=north,
                    legend columns=6,
                    fill=none,
                nodes={scale=0.8, transform shape},},
        xlabel={\scriptsize{Shots}},
        ylabel={\scriptsize{Avg. Accuracy (\%)}},
        ymajorgrids=true,
        ytick={52,56,60,64,68,72,76,80,84},
        yticklabel={$\tiny\pgfmathprintnumber{\tick}$},
        ylabel shift=-5pt,
        xlabel shift=-3pt,
        axis x line*=bottom,
        axis y line*=left,  
        symbolic x coords={1,2,4,16,50\%,100\%},
        cycle list={color1,color2,color3,color4,color5,color6,color7,color8,color9,color21},
        xtick={1,2,4,16,50\%,100\%},
        xticklabel={\textsc{\tick}},
        minor x tick num=1,
        xminorgrids,
        minor tick length=0,
        major x tick style = transparent,
        mark size=1.0pt,
        ]

    \addplot+[mark=*,mark size=1pt,mark options={fill=color2},color=color2,draw opacity=0.8] coordinates {(1, 58.40)(2, 62.79)(4, 66.84)(16, 72.80)(50\%, 78.28)(100\%, 80.27)};

    \addplot+[mark=*,mark size=1pt,mark options={fill=color8},color=color8,draw opacity=0.8] coordinates {(1, 64.44)(2, 66.69)(4, 68.16)(16, 72.13)(50\%, 78.97)(100\%, 81.21)};

    \addplot+[mark=*,mark size=1pt,mark options={fill=color3},color=color3,draw opacity=0.8] coordinates {(1, 59.71)(2, 64.89)(4, 67.19)(16, 72.99)(50\%, 79.53)(100\%, 82.40)};

    \addplot+[mark=*,mark size=1pt,mark options={fill=color6},color=color6,draw opacity=0.8] coordinates {(1, 62.52)(2, 66.89)(4, 70.45)(16, 75.82)(50\%, 82.01)(100\%, 84.40)};
    
    \addplot+[mark=*,mark size=1pt,mark options={fill=color10},color=color10,draw opacity=0.8] coordinates {(1, 58.49)(2, 64.28)(4, 68.57)(16, 76.16)(50\%, 82.01)(100\%, 84.46)};

\end{axis}
    \begin{axis}[
       xmin=0,
       xmax=16,
       ymin=1,
       ymax=2,
       hide axis,
       width=5cm,
       height=5cm,
       font=\small,
       mark size=1.0pt,
       legend style={
               at={(0.7,0.0)},
               anchor=south,
               draw=none,
               legend columns=1,
               fill opacity=0.8,
               nodes={scale=0.7, transform shape},
               cells={align=left},
           },
       legend cell align={left},
   ]

\addplot+ [mark=*, mark size=2pt, mark options={fill=color6},color=color6, line width=0.7pt,solid] coordinates { (0,0) };
\addlegendentry{RandLoRA6 (6.6G)}

\addplot+ [mark=*, mark size=2pt, mark options={fill=color2},color=color2, line width=0.7pt,solid] coordinates { (0,0) };
\addlegendentry{NoLA (6.8G)}

\addplot+ [mark=*, mark size=2pt, mark options={fill=color8},color=color8, line width=0.7pt,solid] coordinates { (0,0) };
\addlegendentry{VeRA 256 (6.8G)}

\addplot+ [mark=*, mark size=2pt, mark options={fill=color3},color=color3, line width=0.7pt,solid] coordinates { (0,0) };
\addlegendentry{LoRA32 (6.8G)}

\addplot+ [mark=*, mark size=2pt, mark options={fill=color10},color=color10, line width=0.7pt,solid] coordinates { (0,0) };
\addlegendentry{FT (8.5G)}

\end{axis}
\end{tikzpicture}}   
    \subfloat[ViT-L/14]{\begin{tikzpicture}
    \begin{axis}[
        width=5cm,
        height=6cm,
        font=\tiny,
        ymin=74,
        ymax=92,
        legend style={draw=none,at={(0.55,1.3)}, 
                    text width=1.3cm,
                    anchor=north,
                    legend columns=6,
                    fill=none,
                nodes={scale=0.8, transform shape},},
        xlabel={\scriptsize{Shots}},
        ylabel={\scriptsize{Avg. Accuracy (\%)}},
        ymajorgrids=true,
        ytick={74, 78, 82, 86, 90},
        yticklabel={$\tiny\pgfmathprintnumber{\tick}$},
        ylabel shift=-5pt,
        xlabel shift=-3pt,
        axis x line*=bottom,
        axis y line*=left,  
        symbolic x coords={0,1,2,4,16,50\%,100\%},
        cycle list={color1,color2,color3,color4,color5,color6,color7,color8,color9,color21},
        xtick={0,1,2,4,16,50\%,100\%},
        xticklabel={\textsc{\tick}},
        minor x tick num=1,
        xminorgrids,
        minor tick length=0,
        major x tick style = transparent,
        mark size=1.0pt,
        ]

\addplot+[mark=*,mark size=1pt,mark options={fill=color2},color=color2,draw opacity=1] coordinates {(1, 75.30)(2, 77.06)(4, 78.97)(16, 82.50)(16, 82.50)(50\%, 86.05)(100\%, 88.07)};

\addplot+[mark=*,mark size=1pt,mark options={fill=color8},color=color8,draw opacity=1] coordinates {(1, 77.85)(2, 79.11)(4, 80.29)(16, 81.48)(50\%, 86.28)(100\%, 88.05)};

\addplot+[mark=*,mark size=1pt,mark options={fill=color3},color=color3,draw opacity=1] coordinates {(1, 75.91)(2, 78.44)(4, 80.30)(16, 83.78)(50\%, 87.70)(100\%, 89.25)};

\addplot+[mark=*,mark size=1pt,mark options={fill=color6},color=color6,draw opacity=1] coordinates {(1, 78.34)(2, 80.51)(4, 81.69)(16, 84.56)(50\%, 88.55)(100\%, 90.11)};

\addplot+[mark=*,mark size=1pt,mark options={fill=color10},color=color10,draw opacity=1] coordinates {(1, 76.74)(2, 80.01)(4, 81.66)(16, 85.10)(50\%, 88.32)(100\%, 89.82)};

\end{axis}
    \begin{axis}[
       xmin=0,
       xmax=16,
       ymin=1,
       ymax=2,
       hide axis,
       width=5cm,
       height=5cm,
       font=\small,
       mark size=1.0pt,
       legend style={
               at={(0.7,0.0)},
               anchor=south,
               draw=none,
               legend columns=1,
               fill opacity=0.8,
               nodes={scale=0.7, transform shape},
               cells={align=left},
           },
       legend cell align={left},
   ]

\addplot+ [mark=*, mark size=2pt, mark options={fill=color6},color=color6, line width=0.7pt,solid] coordinates { (0,0) };
\addlegendentry{RandLoRA8 (21.7G)}

\addplot+ [mark=*, mark size=2pt, mark options={fill=color2},color=color2, line width=0.7pt,solid] coordinates { (0,0) };
\addlegendentry{NoLA (23.1G)}

\addplot+ [mark=*, mark size=2pt, mark options={fill=color8},color=color8, line width=0.7pt,solid] coordinates { (0,0) };
\addlegendentry{VeRA 256 (23.1G)}

\addplot+ [mark=*, mark size=2pt, mark options={fill=color3},color=color3, line width=0.7pt,solid] coordinates { (0,0) };
\addlegendentry{LoRA32 (23.2G)}

\addplot+ [mark=*, mark size=2pt, mark options={fill=color10},color=color10, line width=0.7pt,solid] coordinates { (0,0) };
\addlegendentry{FT (27.8G)}

   \end{axis}
\end{tikzpicture}}    
    \subfloat[ViT-H/14]{\begin{tikzpicture}
    \begin{axis}[
        width=5cm,
        height=6cm,
        font=\tiny,
        ymin=77,
        ymax=92,
        legend style={draw=none,at={(0.55,1.3)}, 
                    text width=1.3cm,
                    anchor=north,
                    legend columns=6,
                    fill=none,
                nodes={scale=0.8, transform shape},},
        xlabel={\scriptsize{Shots}},
        ylabel={\scriptsize{Avg. Accuracy (\%)}},
        ymajorgrids=true,
        ytick={78, 82, 86, 90},
        yticklabel={$\tiny\pgfmathprintnumber{\tick}$},
        ylabel shift=-5pt,
        xlabel shift=-3pt,
        axis x line*=bottom,
        axis y line*=left,  
        symbolic x coords={0,1,2,4,16,50\%,100\%},
        cycle list={color1,color2,color3,color4,color5,color6,color7,color8,color9,color21},
        xtick={0,1,2,4,16,50\%,100\%},
        xticklabel={\textsc{\tick}},
        minor x tick num=1,
        xminorgrids,
        minor tick length=0,
        major x tick style = transparent,
        mark size=1.0pt,
        ]

\addplot+[mark=*,mark size=1pt,mark options={fill=color2},color=color2,draw opacity=1] coordinates {(1, 77.99)(2, 80.33)(4, 82.13)(16, 84.73)(50\%, 87.78)(100\%, 89.01)};

\addplot+[mark=*,mark size=1pt,mark options={fill=color8},color=color8,draw opacity=1] coordinates {(1, 78.20)(2, 80.28)(4, 82.32)(16, 84.36)(50\%, 88.06)(100\%, 89.69)};

\addplot+[mark=*,mark size=1pt,mark options={fill=color3},color=color3,draw opacity=1] coordinates {(1, 78.56)(2, 81.23)(4, 83.09)(16, 85.39)(50\%, 88.62)(100\%, 90.09)};

\addplot+[mark=*,mark size=1pt,mark options={fill=color6},color=color6,draw opacity=1] coordinates {(1, 79.83)(2, 83.22)(4, 84.62)(16, 86.19)(50\%, 89.50)(100\%, 90.63)};

\addplot+[mark=*,mark size=1pt,mark options={fill=color10},color=color10,draw opacity=1] coordinates {(1, 79.01)(2, 81.66)(4, 83.43)(16, 86.09)(50\%, 89.19)(100\%, 90.38)};

\end{axis}
    \begin{axis}[
       xmin=0,
       xmax=16,
       ymin=1,
       ymax=2,
       hide axis,
       width=5cm,
       height=5cm,
       font=\small,
       mark size=1.0pt,
       legend style={
               at={(0.7,0.0)},
               anchor=south,
               draw=none,
               legend columns=1,
               fill opacity=0.8,
               nodes={scale=0.7, transform shape},
               cells={align=left},
           },
       legend cell align={left},
   ]

\addplot+ [mark=*, mark size=2pt, mark options={fill=color6},color=color6, line width=0.7pt,solid] coordinates { (0,0) };
\addlegendentry{RandLoRA10 (38.2G)}

\addplot+ [mark=*, mark size=2pt, mark options={fill=color2},color=color2, line width=0.7pt,solid] coordinates { (0,0) };
\addlegendentry{NoLA (39.5G)}

\addplot+ [mark=*, mark size=2pt, mark options={fill=color8},color=color8, line width=0.7pt,solid] coordinates { (0,0) };
\addlegendentry{VeRA 1024 (39.5G)}

\addplot+ [mark=*, mark size=2pt, mark options={fill=color3},color=color3, line width=0.7pt,solid] coordinates { (0,0) };
\addlegendentry{LoRA32 (39.7G)}

\addplot+ [mark=*, mark size=2pt, mark options={fill=color10},color=color10, line width=0.7pt,solid] coordinates { (0,0) };
\addlegendentry{FT (57.5G)}

   \end{axis}
\end{tikzpicture}}
    \caption{Tuning CLIP's vision and language encoders for image classification. Accuracy averaged over 22 datasets. We additionally report max GPU VRAM usage during training.}
    \label{fig:results-clip}
\end{figure}

\subsection{Commonsense Reasoning}
We evaluate \method{} for fine-tuning LLMs on eight commonsense reasoning tasks (see Appendix~\ref{app:commonsensedatasets}). We fine-tune Qwen2 (0.5B), Phi3 (3B), and Llama3 (8B) models and assess data efficiency by training on both a 170,000-sample full dataset and a 15,000-sample subset, following~\citet{2023_arXiv_llmadapters}.

Table~\ref{tab:llmavgresults} compares \method{} to LoRA, VeRA, and NoLA. We test two LoRA ranks: rank-16 ("Efficient") and rank-32 ("Performant"). We then scale \method{} the same or lower amount of parameters to ensure a fair comparison. Detailed results are found in Appendix~\ref{tab:comsense}

\method{} performs competitively with, and sometimes surpasses, LoRA.  Phi3's strong zero-shot abilities enable VeRA and NoLA to achieve strong results despite fewer parameters. Conversely, Qwen2 and Llama3 require more adaptation, challenging VeRA and NoLA to match LoRA's performance. The 15k-sample regime can lead to overfitting when scaling trainable parameters for LoRA and \method{}, decreasing performance even with dropout regularization. When training on the full 170k samples, \method{} consistently outperforms LoRA. Results comparing with DoRA~\citep{2024_ICML_DoRA} for LLama3 only are available in Table~\ref{tab:llmavgresultsdora} in the appendix where \method{} outperforms both DoRA and LoRA for larger parameter budgets, while DoRA and LoRA are competitive at "Efficient" budgets.  We conclude \method{} is a compelling alternative to LoRA and DoRA for LLM fine-tuning, especially with larger datasets and parameter budgets.

\begin{table}[t]
\centering
\caption{Parameter-efficient fine-tuning of Large Language Models (LLMs). Results averaged over 8 commonsense reasoning tasks. We bold the best accuracy between parameter-equivalent \method{} and LoRA configurations.}
\setlength{\tabcolsep}{4pt}
\small
\begin{tabular}{lcccccccc}
\toprule
\multirow{2}{*}{Network} & \multirow{2}{*}{Size} & \multirow{2}{*}{ZeroShot} & \multirow{2}{*}{NoLA} & \multirow{2}{*}{VeRA} & \multicolumn{2}{c}{LoRA}  & \multicolumn{2}{c}{\method{}}  \\
\cmidrule(lr){6-7} \cmidrule(lr){8-9}
& & & & & \text{Efficient}  & \text{Performant} & \text{Efficient}  & \text{Performant} \\

\midrule
\multirow{2}{*}{Qwen2-0.5b} & 15k & 5.2 & 42.6 & 48.1 & 53.2 & 52.3 & \textbf{53.5}  &52.9 \\
& 170k & 5.2 & 47.4 & 51.8 & 57.4 & 57.3 & 57.7 & \textbf{57.9} \\
\midrule

\multirow{2}{*}{Phi3-3b} & 15k & 65.4 & 80.4 & 78.6 & 81.8 & 80.3 & 81.7 & \textbf{82.3} \\
& 170k & 65.4 & 82.3 & 81.4 & 84.6 & 85.0 & 84.7 & \textbf{85.2} \\
\midrule

\multirow{2}{*}{LLama3-8b} & 15k & 27.0 & 76.9 & 77.1 & 82.7 & \textbf{83.1} &  81.0 & 81.3  \\
& 170k & 27.0 & 81.2 & 81.7 & 84.4 & 85.2 & 84.6 & \textbf{85.6} \\
\bottomrule
\end{tabular}
\label{tab:llmavgresults}
\end{table}

\section{Discussion}

\begin{figure}[t]
    \centering
    \caption{How close do \method{} and LoRA get to standard fine-tuning ? We compare CKA scores of \method{} and LoRA with fine-tuned activations (top) and the mode connectivity in the loss landscape of UCF101 (bottom)}
    \begin{subfigure}[b]{0.45\textwidth}
    
        \begin{subfigure}[b]{\textwidth}
        \includegraphics[width=\textwidth]{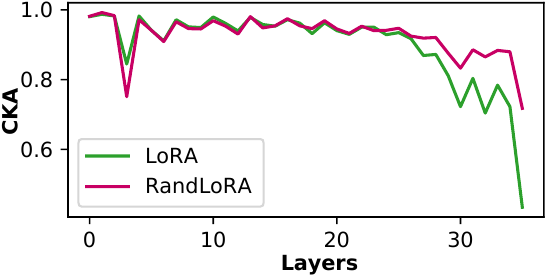}
        \caption{CKA with fine-tuning CLIP}
        \label{fig:sub1}
        \end{subfigure}
        \begin{subfigure}[b]{\textwidth}
        \includegraphics[width=\textwidth]{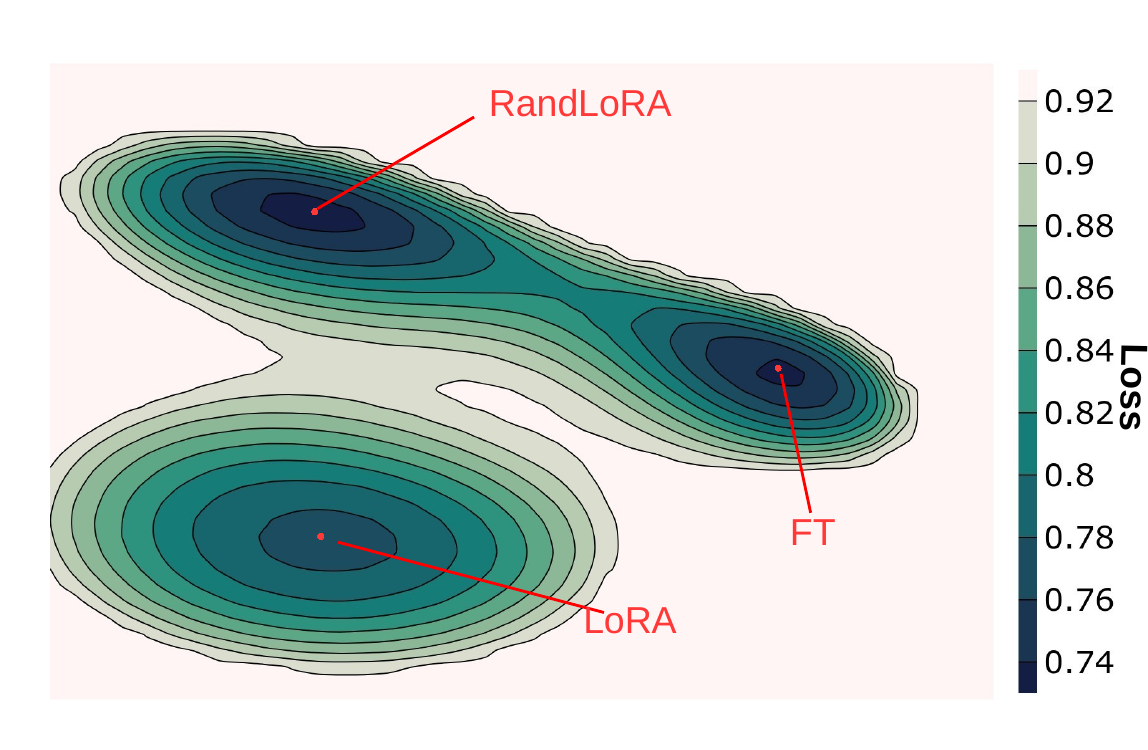}
        \caption{Loss landscape CLIP}
        \label{fig:sub2}
    \end{subfigure}
    \end{subfigure}    
    \quad
    \begin{subfigure}[b]{0.45\textwidth}

    \begin{subfigure}[b]{\textwidth}
        \includegraphics[width=\textwidth]{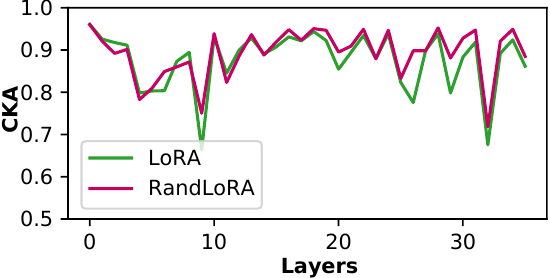}
        \caption{CKA with fine-tuning DinoV2}
        \label{fig:s1}
        \end{subfigure}
    
    \begin{subfigure}[b]{\textwidth}
        \includegraphics[width=\textwidth]{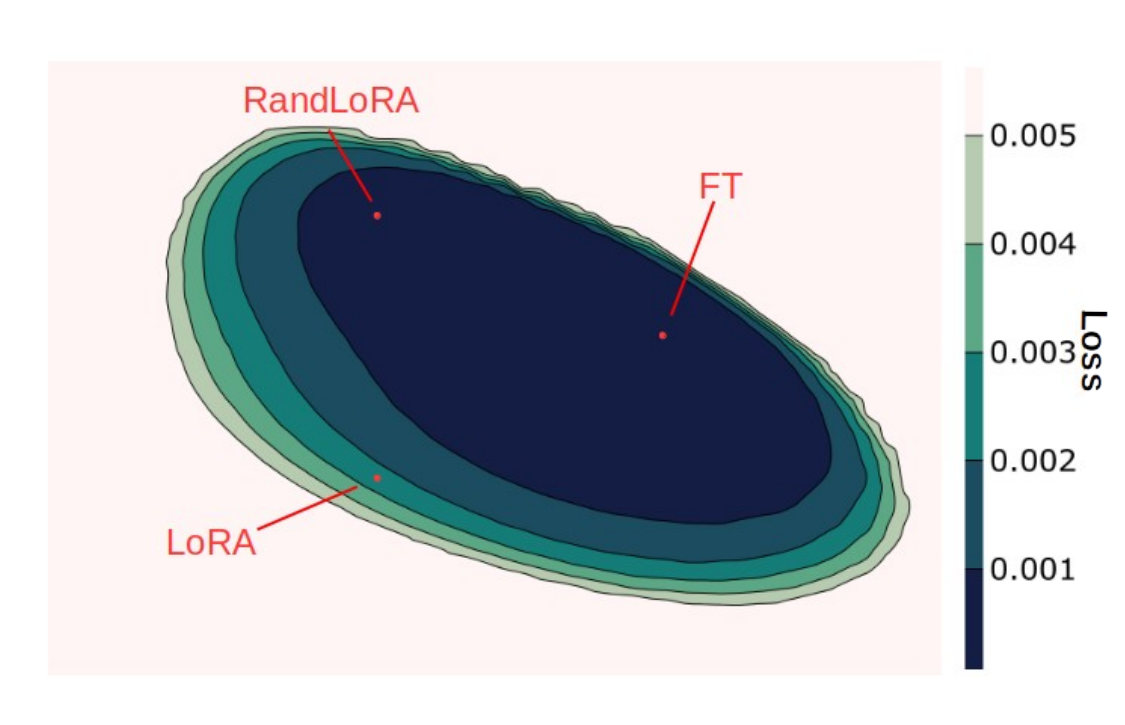}
        \caption{Loss landscape DinoV2}
        \label{fig:s2}
    \end{subfigure}
    \end{subfigure}
    \label{fig:main}
\end{figure}

\subsection{Similarities with fine-tuning: activations\label{sec:simactiv}}
We evaluate activation similarity to assess LoRA and \method{}'s ability to mimic fine-tuned model activations. Using the Centered Kernel Alignment (CKA)~\citep{2019_ICML_CKA} metric, we measure the similarity between activations of LoRA, \method{}, and a fully fine-tuned model. This protocol assesses how well each method captures dataset-specific activation patterns. Figure~\ref{fig:sub1} shows CKA scores for self-attention and MLP layers in CLIP and DinoV2 vision backbones, averaged over 5 datasets where \method{} imrpoves over LoRA. For CLIP, LoRA's CKA decreases in deeper layers, losing alignment with fine-tuned activations.  \method{}, with equal parameters, matches LoRA's early layer alignment but improves upon it in deeper layers.  This CKA drop for LoRA in deeper layers is absent in DinoV2, explaining LoRA's near-identical accuracy to fine-tuning on DinoV2. This difference likely arises from training objectives: DinoV2's visual objective creates classification-ready features needing minimal weight adjustments, thus low-rank LoRA suffices. CLIP's multimodal objective, however, demands higher ranks for effective adaptation to vision tasks.

\subsection{Similarities with fine-tuning: loss landscape}
We analyze loss landscape connectivity for models fine-tuned with standard fine-tuning, LoRA, and \method{}.  We visualize a 2D loss landscape plane by positioning LoRA, \method{}, and fine-tuning models at (0,0), (1,0), and (0.5,1) respectively. For each point $(x, y)$ on this plane, we interpolate model weights by solving for coefficients $\alpha_i$ (where $\sum_{i=1}^3\alpha_i = 1$) and evaluate the interpolated model's loss on a $5\%$ training subset.
Figure~\ref{fig:sub2} shows that for CLIP, \method{} reaches a deeper loss minima than LoRA, often with a low-loss path to the fine-tuning optimum, and despite training the same parameter count. For DinoV2, all optima reside in a shared low-loss basin, with LoRA already close to fine-tuning, reflecting LoRA's strong performance on this task. These visualizations reinforce LoRA's low rank it particularly limiting for complex tasks, and demonstrate \method{}'s ability to achieve deeper minima than LoRA with equal parameters due to full-rank updates.  Appendix~\ref{app:losslandscape} provides 3D visualizations for additional datasets.

\subsection{Further studies on full vs low rank fine-tuning of CLIP}
We investigate whether \method{}'s CLIP performance advantage over LoRA stems from better SVD approximation or its full-rank capability. We ablate \method{} with two rank-controlled variants. \method{}-a restricts the update rank to $r$ by averaging bases before multiplication: $\Delta W = \left(\sum_{i=1}^N B_i \Lambda_i \right) \left(\sum_{i=1}^N A_i \Gamma_i\right)$. \method{}-b uses half-rank updates by setting $N = \text{rank}(\Delta W)/r/2$ and adjusting base rank to maintain parameter count parity with \method{}-$r$. All variants train the same parameters, only update rank varies. Table~\ref{tab:abla} presents accuracy on 100\% of 22 datasets for CLIP ViT-B/32. Results show that higher update rank correlates with better performance, given equal parameter counts. This supports the importance of large rank updates, particularly for CLIP fine-tuning.

\begin{table}[t]
\centering
\begin{minipage}{0.4\textwidth}
\centering
\caption{Ablation on the rank of the updates. The same amount of trainable parameters is used in all methods.}
\begin{tabular}{lcc}
\toprule
Method & Rank & Accuracy \\
\midrule
LoRA & 32 & 83.74 \\
\method{}-a & 32 & 83.62 \\
\method{}-b & 384 & 85.32 \\
\method{}-6 & 768 & 85.98 \\
\bottomrule
\end{tabular}

\label{tab:abla}
\end{minipage}
\quad
\begin{minipage}{0.55\textwidth}
\centering
\caption{Fine-tuning CLIP or LLama3 using \method{} \blue{different random distributions or base sparsity.}}
\begin{tabular}{lcc}
\toprule
Model & Sparsity & Accuracy \\
\midrule
CLIP-ViT-B/32 - uniform & 0\% & 85.98 \\
\blue{CLIP-ViT-B/32 - normal} & 0\% & 85.61 \\
\blue{CLIP-ViT-B/32 - binary} & 0\% & 85.52 \\
CLIP-ViT-B/32 & 66\% & 85.43 \\
\blue{CLIP-ViT-B/32} & 93\% & 85.57 \\
\blue{CLIP-ViT-B/32} & 98\% & 84.35\\
\blue{CLIP-ViT-B/32} & 99\% & 83.34\\
\midrule
LLama3-8b & 0\% & 85.59 \\
LLama3-8b & 66\% & 85.42 \\
\bottomrule
\end{tabular}

\label{tab:ablasparse}
\end{minipage}
\end{table}

\subsection{Sparse random matrices\label{sec:randomprojtheory}}
We propose to investigate using sparse random matrices for improved memory and computational efficiency, drawing inspiration from random projection literature and the Johnson-Lindenstrauss lemma~\citep{1984_JLlemma}. We adopt the sparse construction from~\citet{2001_SIFKDD_sparserandom} and~\citet{2006_ACM_verysparse}, where matrix elements are $\{-1, 0, 1\}$ with probabilities $\{\frac{1}{s}, 1-\frac{2}{s}, \frac{1}{s}\}$ ($s \in [2,\sqrt{D}]$ for $W\in\mathbb{R}^{D\times d}$), followed by normalization.  Appendix~\ref{app;collinearsparse} discusses why this formulation preserves full rank. Table~\ref{tab:ablasparse} shows experimental results using these sparse bases in \method{}. We explore sparsity ratios $s \in \{2, 6, \sqrt{D}, 100, 200\}$, achieving sparsity levels from $66$ to $99\%$. Consistent with~\citet{2006_ACM_verysparse}, the recommended sparsity levels ($\sqrt{D}$) yield performance comparable to dense matrices, theoretically reducing memory and compute. However, higher sparsity can degrade accuracy, suggesting potential for optimized \method{} variants using compute-optimized sparse random bases.

\subsection{Summary of differences with related random bases algorithms\label{sec:diffvera}}
Prior work like VeRA~\citep{2024_ICLR_VeRA} and NoLA~\citep{2024_ICLR_NoLA} utilizes random bases for parameter-efficient fine-tuning.  However, unlike VeRA and NoLA which approximate a low-rank LoRA update, \method{} aims to approximate the full-rank weight update. It could be argued that VeRA approximates only the first block in a decomposition of $W$, whereas \method{} approximates all blocks. Thus, while VeRA and NoLA improve parameter-efficiency while maintaining low-rank updates, \method{} addresses cases requiring full-rank updates. Furthermore, Equation~\eqref{eq:W} evidences the flexibility in \method{}'s parameter count, ranging from VeRA's parameter efficiency ($r=\text{rank}(W)$) to full fine-tuning parameters ($r=1$) while maintaining full-rank.

\subsection{Limitations}
Despite \method{}'s effectiveness, we identify three key limitations for future research.

First, \method{} introduces computational overhead in weight update calculations, increasing training time for larger models (Appendix~\ref{app:traintime}). We however evidence room for improvement using ternary sparse bases in Section~\ref{sec:randomprojtheory}. Future work should explore matmul-free matrix combinations using these ternary sparse bases. Efficient implementations could replace costly matrix products with simple aggregations, eliminating floating-point arithmetic~\citep{2006_ACM_verysparse}, and accelerating \method{} training time pending the development of optimized CUDA kernels~\citep{2024_arxiv_matmulfree}.

Second, exploring non-random, optimal bases $B_i$ and $A$ could improve convergence and efficiency by further reducing $\epsilon$ in equation~\eqref{eqn;frob_est_assump1}.  Discovering such bases, potentially through experiments or decomposition of pre-trained weights~\citep{2024_loraxs, 2024_NeurIPS_Pissa}, is a promising research direction to enhance \method{}.

Third, hybrid approaches combining LoRA and \method{} warrant investigation. LoRA could estimate the dominant SVD components of $W$, while \method{} captures the remaining spectral information efficiently.  Despite challenges in harmonizing training objectives, a starting point would use \method{} to refine a LoRA when convergence is insufficient. Addressing these limitations will further improve \method{}'s potential for efficient full-rank fine-tuning.

\section{Conclusion}
This paper introduces \method{}, a method achieving parameter efficiency and low memory cost while enabling full rank model updates. Our findings underscore the critical importance of full-rank updates when fine-tuning pre-trained architectures and we observe that our approach surpasses LoRA's performance for an equal parameter count, highlighting the value of full-rank updates in large model fine-tuning. Through extensive experiments across diverse tasks we demonstrated the efficacy of our method. While \method{} incurs additional computational overhead due to random basis multiplications, memory consumption remains contained and we provide venues for reducing this compute in practice. As a results, \method{} offers a viable alternative to LoRA for fine-tuning large pre-trained models on consumer-grade hardware. Our results have significant implications for efficient and effective model adaptation, prompting for future research in scalable and versatile full-rank fine-tuning techniques.

\section*{Acknowledgments}
This research is funded in part by the Australian Government through the Australian Research Council (Project DP240103278), and the Centre of Augmented Reasoning at the Australian Institute for Machine Learning, established by a grant from the Department of Education. This work is also supported by supercomputing resources provided by the Phoenix HPC service at the University of Adelaide. 

\bibliography{iclr2025_conference}
\bibliographystyle{iclr2025_conference}

\newpage
\appendix
\section{3D visualizations of CLIP's loss landscape\label{app:losslandscape}}
\begin{figure}[t]
    \centering
    \begin{subfigure}[b]{0.30\textwidth}
    \includegraphics[width=\textwidth]{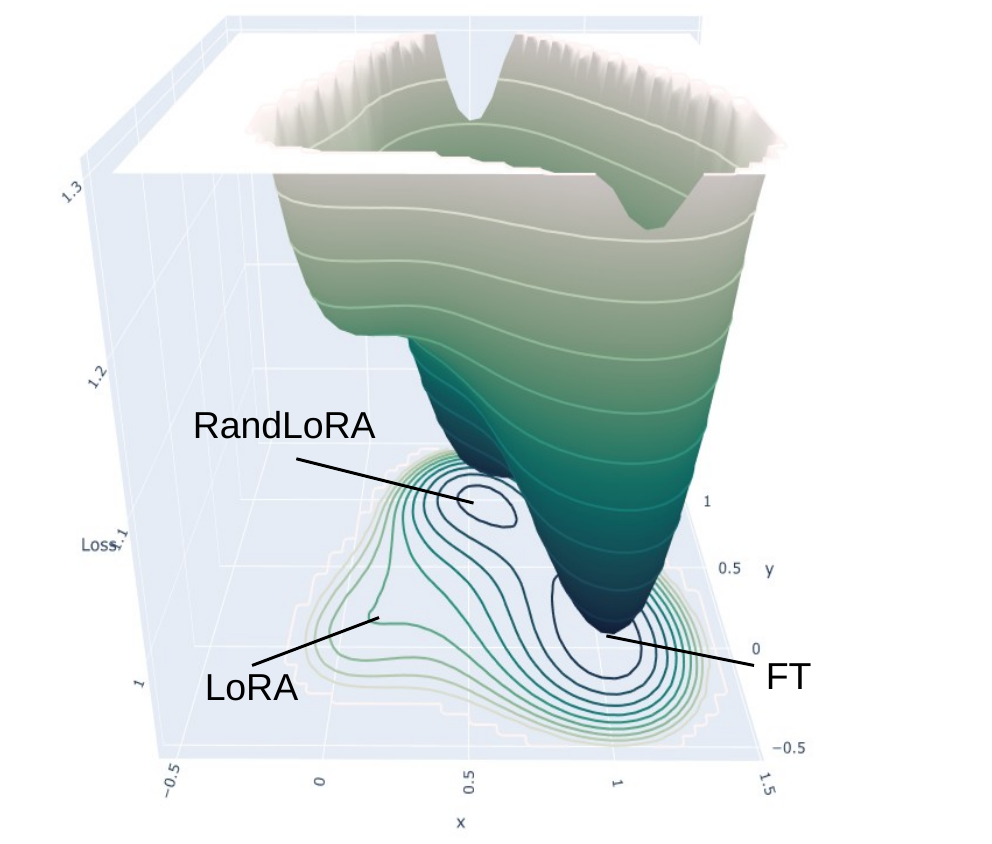}
    \caption{CIFAR-100}
    \label{fig:cifar3d}
    \end{subfigure}
    \quad
    \begin{subfigure}[b]{0.30\textwidth}
    \includegraphics[width=\textwidth]{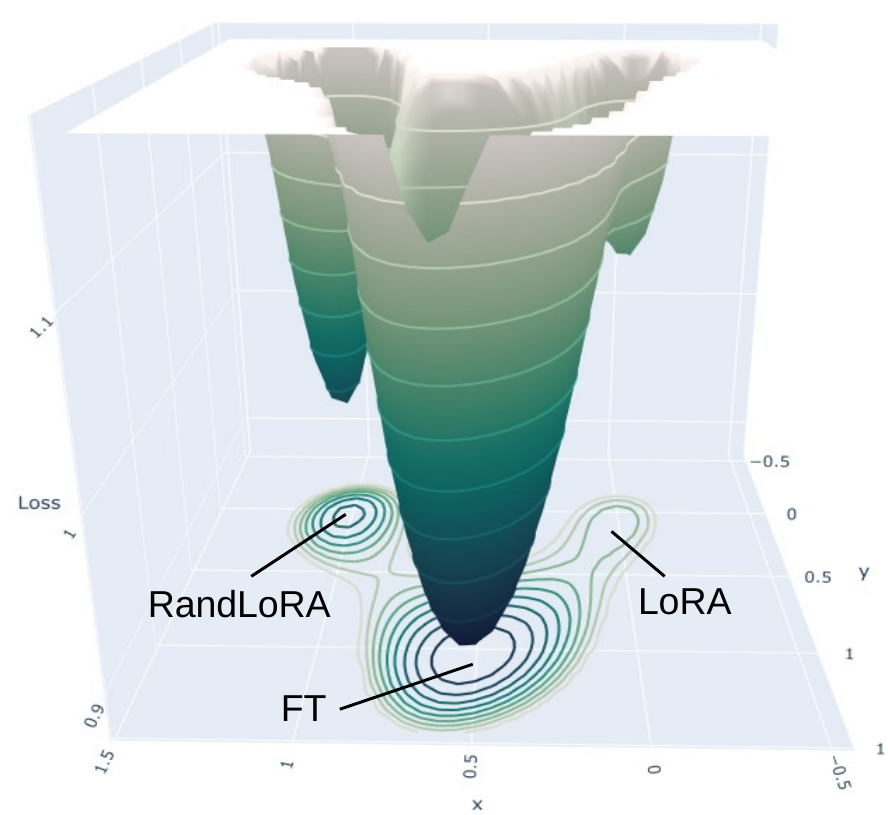}
    \caption{Food-101}
    \label{fig:food3d}
    \end{subfigure}
    \quad
    \begin{subfigure}[b]{0.30\textwidth}
    \includegraphics[width=\textwidth]{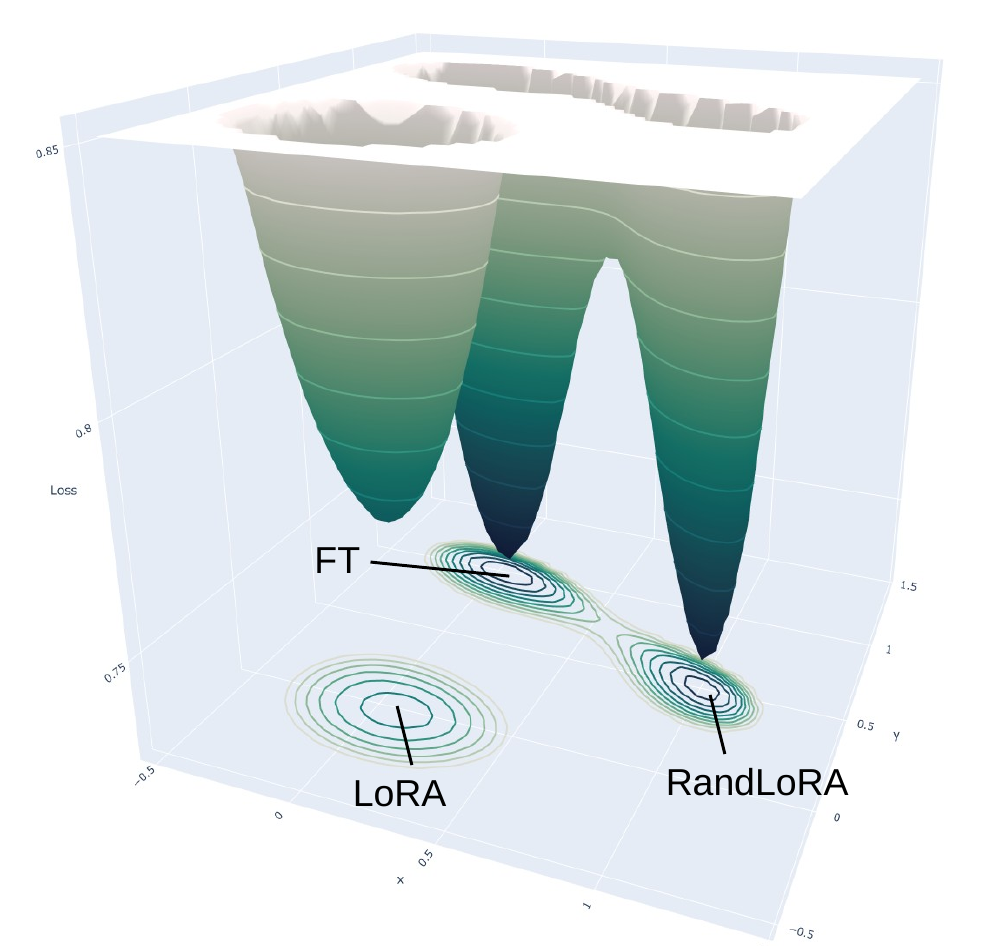}
    \caption{UCF-101}
    \label{fig:ucf3d}
    \end{subfigure}
    \caption{Mode connectivity in the loss landscape when tuning CLIP for image classification. Interactive 3D figures are available in the supplementary material}
    \label{fig:losslandscape3d}
\end{figure}

We propose here further visualizations of the mode connectivity between LoRA, \method{} and standard fine-tuning. To compute the loss value between the minimas reached by LoRA, \method{} and fine-tuning, define a 2D plane using 3 equidistant points representing LoRA, standard fine-tuning and \method{} and we then solve for interpolation coefficients $\alpha_{1..3}$ so that their sum equals 1. The weights of the model we evaluate is then $W_0 + \alpha_1 \text{LoRA} + \alpha_2 \text{FT} + \alpha_3 \text{\method}$. The loss is evaluated on a fixed $5\%$ subset of the training set. Since the process of evaluating the loss at all coordinates on the plane is time consuming, we only perform this study for the CLIP-ViT-B/32 architecture where \method{} is especially successful. In all visualizations, the number of trainable parameters for LoRA and \method{} are the same. We clamp loss values $20\%$ above the shallowest minima to improve visualization.
3D representation as well as the associated 2D elevation projection is provided in Figure~\ref{fig:losslandscape3d}. The interactive 3D figures are provided in the HTML format in the supplementary material.

\section{Additional results\label{app;resultsglue}}
Kronecker\blue{We report here further results on the General Language Understanding Evaluation (GLUE)~\citep{2019_ICLR_glue} and End-to-end (E2E)~\citep{2017_ACL_e2e} generation benchmarks. While GLUE is a text classification task, E2E is a natural language generation task. We also report results comparing \method{} and LoRA with a prompt tuning baseline~\citep{2024_CVPR_DEPT} for classification using CLIP's vision backbone as in section~\ref{sec:expvision} in appendix~\ref{app;deptresults}}

\subsection{GLUE results}
\blue{We report results for \method{} and compare with LoRA and VeRA on the SST-2, MRPC, COLA, QNLI, RTE and STS-N tasks. We report Matthew’s correlation for CoLA, Pearson correlation for STS-B, and accuracy for the remaining tasks. We report results using the RoBERTa network~\cite{2019_arxiv_roberta} in the base and large configurations and perform 5 runs to report average performance and one standard deviation. Results are displayed in Table~\ref{tab:glueresults}. We find that for the smaller RoBERTa-base architecture (125M parameters), all algorithms reach the same performance. For the larger RoBERTa-large variant (355M parameters), a larger gap is observed where \method{} improves over the performance of VeRA and LoRA. These findings are in line with the experiments in the main body of the paper where we find that \method{} provided larger improvements for larger models in Figure~\ref{fig:results-clip}.}

\begin{table}[htbp]
\centering
\small
\setlength{\tabcolsep}{2pt}
\caption{\blue{Results on GLUE datasets with the RoBERTa-base and RoBERTa-large models.}}
\label{tab:glueresults}
\begin{tabular}{lccccccccc}
\toprule
\multicolumn{9}{c}{RoBERTa-base} \\
\midrule
Method & Params & SST-2 & MRPC & COLA & QNLI & RTE & STS-N & Average \\
\midrule
VeRA-1024 & 0.26M & 91.9 $\pm$ 0.4 	& 88.4 $\pm$ 1.2 & 59.9 $\pm$ 2.2 & 90.5 $\pm$ 0.4 & 74.9 $\pm$ 1.5 & 90.4 $\pm$ 0.2 &	82.7 $\pm$ 0.3 \\
LoRA-4 & 0.7M & 94.4 $\pm$ 0.5 & 87.3 $\pm$ 0.2 & 58.4 $\pm$ 0.8 & 92.7 $\pm$ 0.2 & 71.5 $\pm$ 1.2 & 90.5 $\pm$ 0.1 & 82.4 $\pm$ 0.3 \\
RandLoRA-64 & 0.7M & 92.2 $\pm$ 0.3 & 88.0 $\pm$ 1.5 & 59.4 $\pm$ 2.1 & 91.3 $\pm$ 0.4 & 74.7 $\pm$ 1.9 & 90.3 $\pm$ 0.2 & 82.6 $\pm$ 0.5 \\
\midrule
\multicolumn{9}{c}{RoBERTa-large} \\
\midrule
VeRA-256 & 0.26M & 95.8 $\pm$ 0.3 & 89.3 $\pm$ 1.2 & 65.3 $\pm$ 1.1 & 94.1 $\pm$ 0.3 & 81.6 $\pm$ 0.8 & 91.8 $\pm$ 0.1 & 86.3 $\pm$ 0.3 \\
LoRA-4 & 1.8M & 95.5 $\pm$ 0.2 & 87.2 $\pm$ 0.7 & 64.7 $\pm$ 1.2 & 94.5 $\pm$ 0.1 & 83.6 $\pm$ 0.4 & 91.8 $\pm$ 0.1 & 86.2 $\pm$ 0.3 \\
RandLoRA-100 & 1.8M & 95.5 $\pm$ 0.3 & 90.1 $\pm$ 0.4 & 67.4 $\pm$ 0.3 & 94.1 $\pm$ 0.3 & 84.5 $\pm$ 0.3 & 91.4 $\pm$ 0.6 & 87.2 $\pm$ 0.1 \\
\bottomrule
\end{tabular}
\end{table}

\subsection{E2E results}
\blue{We train \method{} and LoRA on the E2E dataset using the GPT-2 medium architecture~\citep{2019_blog_gpt2} (355M parameters).
}

\subsection{Comparison with prompt-tuning~\label{app;deptresults}}
\blue{
Prompt tuning is a popular alternative for PEFT where learnable tokens are appended to human-designed prompts and optimized on to improve accuracy. We choose to report the Maple~\cite{2023_CVPR_maple} + DePT~\cite{2024_CVPR_DEPT} state-of-the-art configuration as it is shown in~\citet{2024_CVPR_DEPT} to be a highly competitive configuration for image classification. Table~\ref{tab:resultsdept} reports the results for 4 and 16 shots over the 11 datasets used in~\citet{2024_CVPR_DEPT}.  We train on ViT-B/32 with all algorithms training approximately 3M parameters. We report that although competitive for low shots, prompt tuning struggles to keep up in the 16-shot setting. We note in particular that prompt tuning struggles on datasets that require more adaptation (e.g. FGVCAircraft) whereas LoRA and \method{} in particular manage to more largely improve results. We additionally report that Maple + DePT requires a much longer training time and VRAM usage. For example, 16-shots on ImageNet requires 3.5h and 18GB of VRAM for Maple + DePT while it requires 2 minutes and 4.5GB of VRAM for RandLoRA. 
Because prompt tuning is largely orthogonal to LoRA-type weight updates we suggest that future research should study how to combine these approaches together.
}
\begin{table}[htbp]
\centering
\small
\setlength{\tabcolsep}{2pt}
\caption{\blue{Comparison of LoRA and \method{} with a state-of-the-art prompt tuning algorithm. CLIP ViT-B/32.\label{tab:resultsdept}}}
\begin{tabular}{llccccccccccccc}
\toprule
Shots & Method & \rotatebox[origin=c]{90}{ImageNet} & \rotatebox[origin=c]{90}{Caltech101} & \rotatebox[origin=c]{90}{OxfordIIITPet} & \rotatebox[origin=c]{90}{Cars} & \rotatebox[origin=c]{90}{Flowers102} & \rotatebox[origin=c]{90}{Food101} & \rotatebox[origin=c]{90}{FGVCAircraft} & \rotatebox[origin=c]{90}{SUN397} & \rotatebox[origin=c]{90}{DTD} & \rotatebox[origin=c]{90}{EuroSAT} & \rotatebox[origin=c]{90}{UCF101} & \rotatebox[origin=c]{90}{Average} \\
\midrule
\multirow{3}{*}{4} & LoRA-16 & \textbf{64.9} & 92.0 & 88.2 & 63.9 & 87.9 & \textbf{82.6} & 30.3 & 68.2 & 61.1 & \textbf{89.4} & 74.7 & 73.0 \\
& RandLoRA-10 & 63.9 & 91.7 & 86.4 & 67.0 & 89.9 & 80.8 & \textbf{34.0} & 69.7 & \textbf{62.4} & 84.4 & 74.9 & 73.2 \\
& Maple + DePT & 62.1 & \textbf{95.0} & \textbf{89.5} & \textbf{68.7} & \textbf{90.5} & 79.6 & 28.3 & \textbf{70.2} & 61.7 & 81.4 & \textbf{76.6} & 73.1 \\
\midrule
\multirow{3}{*}{16} & LoRA-16 & 65.8 & 91.7 & 89.5 & \textbf{80.1} & 94.9 & 81.8 & 42.5 & 73.5 & 72.0 & 91.2 & 81.5 & 78.6 \\
& RandLoRA-10 & 66.3 & 95.6 & \textbf{91.1} & 77.4 & 94.5 & \textbf{84.0} & \textbf{45.0} & 73.7 & \textbf{72.5} & \textbf{94.1} & 81.7 & \textbf{79.6} \\
& Maple + DePT & \textbf{67.7} & \textbf{96.0} & 90.5 & 79.1 & \textbf{96.3} & 81.7 & 36.9 & \textbf{74.5} & 70.3 & 90.3 & \textbf{82.1} & 78.7 \\
\bottomrule
\end{tabular}
\end{table}

\subsection{Commonsense reasoning results for DoRA}
We compare \method{} with DoRA~\citep{2024_ICML_DoRA} for tuning LLama3 in Table~\ref{tab:llmavgresultsdora}. We find that \method{} outperforms both DoRA and LoRA for larger parameter budgets (rank 32), while DoRA and LoRA are competitive at "Efficient" budgets (rank 16).

\begin{table}[t]
\centering
\caption{Further comparison with DoRA related methods on LLama3-8b. Results averaged over 8 commonsense reasoning tasks. We bold the best accuracy.}
\setlength{\tabcolsep}{4pt}
\small
\begin{tabular}{lcccc}
\toprule
\multirow{2}{*}{Method} & \multicolumn{2}{c}{Efficient} &  \multicolumn{2}{c}{Performant} \\
\cmidrule(lr){2-3} \cmidrule(lr){4-5}
 & 15k  & 170k & 15k  & 170k \\
\midrule
LoRA & 82.7 & 84.4 & \textbf{83.1} & 85.2 \\
DoRA & \textbf{82.8} & 84.3 & 82.5 & 85.2 \\
RandLoRA & 81.0 & \textbf{84.6} & 81.3 & \textbf{85.6} \\
\bottomrule
\end{tabular}
\label{tab:llmavgresultsdora}
\end{table}

\section{Implementation details~\label{sec:appimplem}}

\subsection{Classification datasets\label{app:classifdatasets}}
We fine-tune vision architectures on $22$ vision datasets ($21$ for pure vision backbones where ImageNet is removed for brevity). We train for $10$ epochs on the few-shot experiments and increase the number of epochs according to dataset constraints for $50\%$ and $100\%$ fine-tuning. Table~\ref{tab:datasets} reports details of the $22$ datasets we use as well as the number of epochs used as in~\citep{2024_NeurIPS_atlas}.
\begin{table}[h!]
    \captionsetup{font=footnotesize}
    \caption{Vision datasets used for the image classification experiments}
    \label{tab:datasets}
    \centering
    \setlength{\tabcolsep}{2pt} 
    \scriptsize
    \begin{tabularx}{\linewidth}{@{\extracolsep{\fill}}clccccc}
        \toprule
        \# & Datasets & Classes & \multicolumn{3}{c}{Splits} & Epochs \\
        \cline{4-6} \\ [-5pt]
        & & & \textit{train} & \textit{val} & \textit{test} &\\
        \midrule
        (1) & Cars & 196 & 7,330 & 814 & 8,041 & 35 \\
        (2) & DTD & 47 & 3,384 & 376 & 1,880 & 76 \\
        (3) & EuroSAT & 10 & 21,600 & 2,700 & 2,700 & 12 \\
        (4) & GTSRB & 43 & 23,976 & 2,664 & 12,630 & 11 \\
        (5) & MNIST & 10 & 55,000 & 5,000 & 10,000 & 5 \\
        (6) & RESISC45 & 45 & 17,010 & 1,890 & 6,300 & 15 \\
        (7) & SUN397 & 397 & 17,865 & 1,985 & 19,850 & 14 \\
        (8) & SVHN & 10 & 68,257 & 5,000 & 26,032 & 4 \\
        (9) & CIFAR10 & 10 & 45,000 & 5,000 & 10,000 & 5 \\
        (10) & CIFAR100 & 100 & 45,000 & 5,000 & 10,000 & 6 \\
        (11) & ImageNet & 1,000 & 1,276,167 & 5,000 & 50,000 & 10 \\
        (12) & STL10 & 10 & 4,500 & 500 & 8,000 & 4 \\
        (13) & Food101 & 101 & 70,750 & 5,000 & 25,250 & 15 \\
        (14) & Caltech101  & 101 & 6,941 & 694 & 1,736 & 10 \\
        (15) & Caltech256 & 257 & 22,037 & 2,448 & 6,122 & 8\\
        (16) & FGVCAircraft & 100 & 3,334 & 3,333 & 3,333 & 60 \\
        (17) & Flowers102 & 102 & 1,020 & 1,020 & 6,149 & 40 \\
        (18) & OxfordIIITPet & 37 & 3,312 & 368 & 3,669 & 5 \\
        (19) & CUB200 & 200 & 5,395 & 599 & 5,794 & 20 \\
        (20) & PascalVOC & 20 & 7,844 & 7,818 & 14,976 & 10 \\
        (21) & Country211 & 211 & 31,650 & 10,550 & 21,100 & 15 \\
        (22) & UCF101  & 101 & 7,639 & 1,898 & 3,783 & 20 \\
        \bottomrule
    \end{tabularx}
    
\end{table}

\subsection{CLIP}
We utilize the pytorch AdamW optimizer with weight decay $0.1$ and a cosine decaying learning rate schedule. To accommodate the full batch size on a single A100 GPU for the ViT-L/14 and ViT-H/14 CLIP architectures, we accumulate 2 batches of 64. This is excepted for the standard fine-tuning of the ViT-H/14 for standard fine-tuning where we need to accumulate 4 batches of 32 due to increasing memory costs. We acquire the pre-trained weights from the openclip repository~\citep{2023_CVPR_openclip} where the use the "openai" weights from ViT-B/32 and ViT-L/14 and the "laion2b\_s32b\_b79k" weights for ViT-H/14.

\subsection{Pure vision backbones}
For pure vision backbones, we use the same configuration as vision and language fine-tuning of CLIP except that we increase the learning rate to $10^-2$ for LoRA and \method{}. We train \method{}-6 for ViT-B/32 and \method{}-8 for Dinov2's ViT-B/14 and CLIP's ViT-L/14.

\begin{table}[ht]
\centering
\caption{Hyper-parameters for different algorithms. Multiple values for hyperparameters denote variances accross the ViT-B/32, ViT-L/14 and ViT-H/14 architectures respectively.}
\label{tab:hyperparameters}
\begin{tabular}{lcccccc}
\toprule
{Algorithm} & {FT} & {LoRA} & {NoLA} & {VeRA} & {\method{}} \\ 
\midrule
Batch size & 128/64/32 & \multicolumn{4}{c}{128/64/64}\\
Learning Rate (LR) & 1e-5 & 1e-3 & 1e-3 & 1e-2 & 1e-3 \\
Scaling coefficient & 1 & $\frac{1}{r}$ & $\frac{1}{r}$ & $\frac{1}{r}$ & $\frac{10}{r}$ \\
Basis rank (r) & -- & 32 & 1 & 256/256/1024 & 6/8/10 \\
Number of basis ($n$) & -- & -- & 1024 & 1 & 128 \\
\bottomrule
\end{tabular}
\end{table}

\begin{table}[ht]
\centering
\caption{LLM fine-tuning hyper-parameters for different algorithms. Multiple values for hyper-parameters denote variances accross the Qwen2 -0.5b, Phi3-8b and LLama3-8b architectures respectively.}
\label{tab:hyperparametersllm}
\begin{tabular}{lcccccc}
\toprule
{Algorithm} & {LoRA} & {NoLA} & {VeRA} & {\method{}} \\ 
\midrule
Batch size & \multicolumn{4}{c}{16/8/4}\\
Learning Rate (LR) & \multicolumn{4}{c}{$10^{-4}$} \\
Scaling coefficient & 2 & $\frac{2}{\sqrt{n}}$ & 2 & $\frac{2}{\sqrt{n}}$ \\
Basis rank (r) & 32 & 1 & 256/1024/1024 & 6/10/15 \\
Number of basis ($n$) & -- & 1024 & 1 & 149/153/136 \\
\bottomrule
\end{tabular}
\end{table}

\subsection{Commonsense reasoning\label{app:commonsensedatasets}}
Our evaluation protocol assesses the model's versatility and reasoning capabilities across eight diverse datasets: BoolQ~\citep{2019_arXiv_boolq} (yes/no question answering), PIQA~\citep{2020_AAAI_piqa} (physics commonsense questions), SIQA~\citep{2019_arXiv_siqa} (social implications reasoning), HellaSwag~\citep{2019_arXiv_hellaswag} (multi-choice scenario completion), WinoGrande~\citep{2021_arXiv_winogrande} (binary sentence completion), ARC-c and ARC-e~\citep{2018_arXiv_ARC} (challenging and easy science questions at a grade-school level), and OBQA~\citep{2018_arXiv_OBQA} (multi-step reasoning). These datasets collectively pose a wide range of challenges, from natural language understanding and commonsense reasoning to physical and social inference. For further details on these datasets, we refer readers to the survey by Hu et al.~\citep{2023_arXiv_llmadapters}.
We train using the hugginface\footnote{https://huggingface.co} transformers library and follow the implementation\footnote{\url{https://github.com/NVlabs/DoRA/tree/main/commonsense_reasoning}} of Liu et al~\citep{2024_ICML_DoRA}. We train for 3 epochs using a learning rate of $1\times 10^{-4}$ and a base scaling coefficient of $2$ for the weight update. To prevent overfitting, we add a dropout layer in each of the adapter's layers with a dropout probability of $0.05$ and perform early stopping using the same validation set of size $120$, drawn from the training set. We maintain hyper-parameters the same across architectures and algorithms except for the scaling ratio of the weight update for  NoLA and \method{} which we further multiply by $1/\sqrt{n}$ where $n$ is the number of bases to account for the increasing norm of the sum of random matrices.

\subsection{Training time, memory consumption \blue{and random bases}}
\subsubsection{Reducing memory consumption\label{sec:memsupp}}
\paragraph{Basis sharing across layers}
\method{} aims to preserve the memory efficiency and training speed advantages of LoRA. As shown in Section~\ref{sec:theorymethod}, although \method{} trains an amount of parameters comparable to LoRA we still have to store $N$ large random bases for each weight update.
We first note that as observed in previous research, ~\citep{2024_ICLR_NoLA,2024_ICLR_VeRA} random bases can be shared across layers. In practice, we generate one pair of random matrices $B_i \in \mathbb{R}^{N\times D_m \times r}$ and $A_0 \in \mathbb{R}^{\times r \times d_m}$, where $D_m$ and $d_m$ represent the largest $D$ and $d$ across all network layers. During forward and backward passes on a layer of size $D \times d$, we select the first $D$ rows of $B$ and $d$ columns of $A$ to perform the weight update. This strategy stores only the largest $B$ and $A$ matrices, which would have to be fit in memory at some point during training anyways. Note that although we do not study this case, this strategy directly generalizes to having different ranks $r$ across layers as has been proposed in AutoLoRA~\citep{2024_arxiv_autolora} for example. This strategy allows us to avoid increasing memory as network depth increases, meaning that \method{} become more efficient when network depth increases.

\paragraph{Efficient back-propagation with a single random $A$ basis}
We evidence in section~\ref{sec:weightupdate} that the $A_i$ matrices do not need to be $N$ dimensional and that a single $A$ matrix modified by $N$ $\Gamma_i$ is enough to acheive full rank. We can thus optimize the backward pass when computing the gradient of $\Lambda_i$ and $Gamma_i$ so that we only have to store one matrix $A \in \mathbb{R}^{r \times d}$ for the backward pass, further reducing memory consumption. 

\paragraph{Efficient matrix multiplication in the forward pass}
We adopt the notations from Section~\ref{sec:memsupp} to optimize the matrix multiplication of $X\in\mathbb{R}^{B\times D}$ during the forward and backward passes: $XW$. Given the pre-trained weight $W_0\in\mathbb{D\times d}$, LoRA computes $Y = XW_0 + XBA$ where we compute $Y = XW_0 + \sum_{i=1}^N \left(XB_i) ( \Lambda_i A_i \Gamma_i\right)$. These equations suggest \method{} would be $N$ times slower to run than LoRA but in practice, the $XW_0$ operation dominates the matmul time and the $N$ RandLoRA operations are naturally parallelized by the CUDA kernel. In practice we observe a $13\%$ training time increase for the smaller ViT-B-32 models and up to $100\%$ in the worst case for larger models with large weight matrices such as LLama3. 

\blue{
\subsection{Sparse random bases\label{app;collinearsparse}}
We continue here the discussion on the possible collinearity of sparse bases. We remind here that we construct the random bases $B_i$ and $A_i$ by assigning 
$$
\begin{cases}
-1, & \text{with probability } \frac{1}{s} \\
0, & \text{with probability } 1-\frac{2}{s} \\
1, & \text{with probability } \frac{1}{s}
\end{cases}
$$
where $s$ an integer in $[2,\sqrt{D}]$ for $W\in\mathbb{R}^{D\times d}$. Because of the ternary nature of these matrices, there is a non-zero probability that two row are collinear across all random matrices, resulting in non full rank. If we can show that is probability is negligible then the full rank constraint will be preserved in practice.
We compute that the probability of drawing the same size $d$ row twice equates to $p=2\times (\frac{s^2 - 4s + 6}{s^2})^{d}$. Taking the example of the ViT-B/32 architectures with $W\in\mathbb{R}^{768\times 768}$ and for the largest recommended optimal sparsity ($s=\sqrt{768}$) we compute $p=2\times 10^{-49}$. The probability of drawing at least two collinear row over $N$ matrices of is $p_2=(N+D)(N+D-1)p$. In the \method{}-6 configuration for ViT-B, $N=128$ resulting in $p_2=8\times 10^{-44}$ meaning these events are negligible in practice even with a large number of sparse bases and that the full rank constraint is preserved.}

\subsubsection{Training time\label{app:traintime}}
We report in Table~\ref{tab:training-times} the relative training time of \method{} compared to LoRA and standard fine-tuning on a single RTX4090 GPU (A100 for LLama3 and ViT-H/14). Since we do not have ressources to fully fine-tune LLama3, we report LoRA as the memory baseline. In addition to Table~\ref{tab:training-times} we report up to 212\% increase over LoRA-64 training time for the best performing RandLoRA-15 configuration for LLama3-8b. This number should be put in perspective with DoRA leading to a 220\% increase in all configurations for LLama3-8b.

\begin{table}[ht]
\centering
\begin{tabular}{lccccc}
\toprule
{Model} & {Architecture} & {LoRA-32} & \blue{DoRA-32} & {\method{}} & {FT} \\
\midrule
\multirow{3}{*}{CLIP-ViT-B/32} & Training Time & 90 & -- & 113 & 100\% \\
& Memory & 81 & -- & 78 & 100\% \\ 
\hline
\multirow{3}{*}{CLIP-ViT-L/14} & Training Time & 95 & -- & 128 & 100\% \\
& Memory & 72 & -- & 71 & 100\% \\ 
\hline
\multirow{3}{*}{CLIP-ViT-H/14} & Training Time & 96 & -- & 122 & 100\% \\
& Memory & 54 & -- & 51 & 100\% \\ 
\hline
\multirow{3}{*}{LLama3-8B} & Training Time & 100 & \blue{220} & \blue{167} & -- \\
& Memory & 100 & 102 & 102 & -- \\ 
\bottomrule
\end{tabular}
\caption{Comparison of training times for LoRA, \method{}, and FT on vision-language or language architectures.}
\label{tab:training-times}
\end{table}

\section{Mathematical derivations and proofs}\label{app;maths_proof}
\subsection{Theorem~\ref{thm;random_sum_est_main1}\label{app:proof_theo}}

In this section we would like to give the details of the proof of theorem \ref{thm;random_sum_est_main1} from the main paper. In order to do so we will start by proving a few lemmas.

Our method consider decompositions similar to those given in \eqref{eqn;svd_rank1_decomp} and \eqref{eqn;svd_rankk_decomp} that are built from random matrices instead of the left and right singular vectors. A key observation is that such decompositions and their sums will yield high rank matrix approximations. The following two lemmas explains why this is the case.

\begin{lemma}\label{lem;sum_full_rank}
Let $B = [B_1,\ldots, B_n]$ denote a matrix where each 
$B_j \in \mathbb{R}^{D \times r}$ and let $A = [A_1,\ldots, A_n]$ denote a matrix where each $A_j \in \mathbb{R}^{d \times r}$. Assume 
$nr \leq \min(D,d)$ and assume that the columns of $B$ are linearly independent and the columns of $A$ are linearly independent. Define
\begin{equation}
    C = \sum_{j=1}^nB_jA_j^\mathsf{T}
\end{equation}
Then we must have that $rank(C) = nr$.
\end{lemma}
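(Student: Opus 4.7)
The plan is to recognize that $C$ is, in fact, a single matrix product. Writing $B = [B_1, \ldots, B_n] \in \mathbb{R}^{D \times nr}$ and $A = [A_1, \ldots, A_n] \in \mathbb{R}^{d \times nr}$ as horizontal concatenations, the standard block-multiplication identity gives
\begin{equation}
    C \;=\; \sum_{j=1}^n B_j A_j^\mathsf{T} \;=\; B A^\mathsf{T}.
\end{equation}
The hypotheses translate directly into $\text{rank}(B) = \text{rank}(A) = nr$: $B$ has full column rank (so, viewed as a linear map $\mathbb{R}^{nr} \to \mathbb{R}^D$, it is injective), and $A^\mathsf{T} \in \mathbb{R}^{nr \times d}$ has full row rank (so, as a map $\mathbb{R}^d \to \mathbb{R}^{nr}$, it is surjective). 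Both statements make sense precisely because $nr \leq \min(D,d)$.

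From here I would invoke the elementary fact that left-composition with an injection preserves rank. Concretely, if $BA^\mathsf{T} x = 0$ then $B(A^\mathsf{T} x) = 0$, and injectivity of $B$ forces $A^\mathsf{T} x = 0$; thus $\ker(BA^\mathsf{T}) = \ker(A^\mathsf{T})$, which by rank-nullity gives
\begin{equation}
    \text{rank}(C) \;=\; \text{rank}(BA^\mathsf{T}) \;=\; d - \dim \ker(A^\mathsf{T}) \;=\; \text{rank}(A^\mathsf{T}) \;=\; nr.
\end{equation}
The matching upper bound $\text{rank}(C) \leq nr$ is automatic from $\text{rank}(BA^\mathsf{T}) \leq \min(\text{rank}(B), \text{rank}(A^\mathsf{T})) = nr$, so equality holds.

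There is no real obstacle beyond identifying $C$ with $BA^\mathsf{T}$ via block multiplication; the rest is a one-line invocation of the rank identity for products with an injective factor. The only point worth checking carefully is that the linear-independence hypotheses, together with $nr \leq \min(D,d)$, are exactly what is needed to make $B$ injective and $A^\mathsf{T}$ surjective as maps between the relevant Euclidean spaces — everything else is immediate.
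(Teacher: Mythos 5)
Your proof is correct and follows the same core route as the paper's: both hinge on recognizing $C$ as the block product $BA^\mathsf{T}$ and then evaluating the rank of that product under the full-column-rank hypotheses on $B$ and $A$. The only difference is the final step --- the paper gets the upper bound from subadditivity of rank over the sum and the lower bound from Sylvester's rank inequality, whereas your kernel argument ($\ker(BA^\mathsf{T}) = \ker(A^\mathsf{T})$ by injectivity of $B$) delivers the exact rank in one stroke; both are valid.
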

\begin{proof}
We first observe that using the inequality 
$rank(X + Y) \leq rank(X) + rank(Y)$ we get that 
$rank(C) \leq nr$ because each term $B_jA_j^\mathsf{T}$ has rank $r$, since the columns of $A$ and $B$ are linearly independent, and there are $n$ of them.

Then observe that we can rewrite $C$ as 
\begin{equation}
    C = BA^T
\end{equation}
Using Sylvester's rank inequality: If $X \in \mathbb{R}^{D \times l}$ and $Y \in \mathbb{R}^{l \times d}$ then 
\begin{equation}
rank(X) + rank(Y) - l \leq rank(XY)
\end{equation}
we have that
\begin{align}
    rank(C) &= rank(BA^T) \\
    &\geq rank(B) + rank(A^T) - kj \\
    &= 2nr - nr \\
    &= nr
\end{align}
and the proof is complete.
\end{proof}

\begin{lemma}\label{lem;random_draws}
Let $\{X_1,\ldots,X_n\}$ denote $n$ vectors in $\R^N$ where 
$n \leq N$ drawn i.i.d from a Gaussian or uniform distribution. Then with probability $1$ $\{X_1,\ldots,X_n\}$ will be linearly independent. 
\end{lemma}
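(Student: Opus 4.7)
The plan is to prove this by induction on $n$, using the fact that any proper linear subspace of $\mathbb{R}^N$ has Lebesgue measure zero, together with the absolute continuity of the Gaussian and uniform distributions with respect to Lebesgue measure.

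For the base case $n=1$, observe that $X_1$ is linearly dependent only if $X_1 = 0$. Since $\{0\}$ is a single point in $\mathbb{R}^N$ with $N \geq 1$, it has Lebesgue measure zero, and therefore a Gaussian or uniform draw equals zero with probability $0$. Thus $\{X_1\}$ is linearly independent with probability $1$.

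For the inductive step, assume that $\{X_1, \ldots, X_{k-1}\}$ is linearly independent with probability $1$, where $k \leq n \leq N$. Condition on this event. Then $V := \mathrm{span}(X_1, \ldots, X_{k-1})$ is a $(k-1)$-dimensional linear subspace of $\mathbb{R}^N$. Since $k-1 < N$, the subspace $V$ is a proper affine subspace and therefore has Lebesgue measure zero in $\mathbb{R}^N$. Because $X_k$ is drawn independently from $X_1, \ldots, X_{k-1}$ according to a distribution that is absolutely continuous with respect to Lebesgue measure (which is the case for both Gaussian and uniform distributions on any open set, and in particular for the product distributions used here), we have $\mathbb{P}(X_k \in V \mid X_1, \ldots, X_{k-1}) = 0$. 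Taking expectation over $X_1, \ldots, X_{k-1}$ and combining with the inductive hypothesis yields that $\{X_1, \ldots, X_k\}$ is linearly independent with probability $1$. Iterating up to $k = n$ completes the proof.

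The only subtle point is the absolute continuity claim; the main (minor) obstacle is specifying precisely what "uniform distribution" means in this statement. The cleanest route is to note that any continuous distribution on $\mathbb{R}^N$ with a density with respect to Lebesgue measure assigns zero probability to any measure-zero set, and both the standard i.i.d. Gaussian on $\mathbb{R}^N$ and the uniform distribution on any bounded open region (e.g., a box) have such densities. Combined with Lemma~\ref{lem;sum_full_rank}, this lemma justifies the "with probability $1$" full-rank claim in Theorem~\ref{thm;random_sum_est_main1}, since the concatenated matrices of random bases will have linearly independent columns almost surely.
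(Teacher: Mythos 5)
Your proof is correct and follows essentially the same route as the paper's: both argue sequentially (your induction is the paper's ``continuing in this way''), using that the span of the previously drawn vectors is a proper subspace of Lebesgue measure zero and that the Gaussian/uniform laws are absolutely continuous with respect to Lebesgue measure. Your version is in fact slightly cleaner, since it dispenses with the paper's unnecessary detour through unit-length vectors and normalization.
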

\begin{proof}
We first note that any measure defined via a Gaussian or Uniform probability distribution is absolutely continuous with respect to the Lebesgue measure. Meaning they have the same sets of measure zero as the Lebesgue measure.

We then prove the case that $\{X_1,\ldots,X_n\}$ are vectors of unit length. Since the vectors were drawn independently, we can first assume we drew $X_1$. The probability that this is the zero vector is $0$ w.r.t the Lebesgue measure on the closed unit ball $B_N(0)$ about the origin in $\R^N$ and hence any other measure absolutely continuous to it. Then draw $X_2$ and note that the probability that $X_2$ lies in 
$span\{X_1\} \cap B_N(0)$ is also $0$ since $span\{X_1\} \cap B_N(0)$ forms a set of $0$ Lebesgue measure in $B_N(0)$. Continuing in this way we find that $\{X_1,\ldots,X_n\}$ will be linearly independent with probability $1$.

For the general case where $\{X_1,\ldots,X_n\}$ are not drawn to have unit length i.e. drawn on the sphere in $\R^N$, we simply note that we can draw each one and then divide by its norm producing one of unit length. Since normalizing by the norm doesn't affect linear independence we get by the above case that  $\{X_1,\ldots,X_n\}$ must be linearly independent with probability $1$.
\end{proof}

Lemmas \ref{lem;sum_full_rank} and \ref{lem;random_draws} show that if we were to i.i.d draw $n$ random vectors $A_1,\ldots,A_n$ in 
$\R^D$ and $n$ vectors $B_1,\ldots, B_n$ using a Gaussian or uniform distribution for $n \leq \min(D,d)$. Then the matrix
$Q = AB^T$ would have rank $n$, where 
$A = [A_1,\ldots, A_n]$ and $B = [B_1,\ldots, B_n]$.

We note that lemma \ref{lem;sum_full_rank} is still true if we were to consider products of the form $B\Lambda A\Gamma$, where $\Lambda$ and $\Gamma$ are diagonal matrices with non-zero diagonal entries.

Using the above two lemmas we can now give a proof of theorem \ref{thm;random_sum_est_main1} from the main paper.

\begin{proof}
The fact that each $B_i\Lambda_iA_i\Gamma_i$ has rank $r$ with probability $1$ follows from lemmas \ref{lem;sum_full_rank} and \ref{lem;random_draws}. In order to estimate the difference
$\lVert W - \sum_{j=1}^nB_j\Lambda_jA_j\Gamma_j\rVert$, we use
\eqref{eqn;svd_rankk_decomp} to write
\begin{equation}
    W = \sum_{j=1}^n U_j\Sigma_jVj^\mathsf{T}.
\end{equation}
We can then estimate
\begin{align}
\lVert W - \sum_{j=1}^nB_j\Lambda_jA_j\Gamma_j\rVert_F &=
\lVert \sum_{j=1}^n U_j\Sigma_jV_j^\mathsf{T} - 
\sum_{j=1}^nB_j\Lambda_jA_j\Gamma_j\rVert_F \\
&= \lVert \sum_{j=1}^n U_j\Sigma_jV^\mathsf{T}_j
- B_j\Lambda_jA_j\Gamma_j\rVert_F \\
&\leq \sum_{j=1}^n\lVert U_j\Sigma_jV^\mathsf{T}_j
- B_j\Lambda_jA_j\Gamma_j\rVert_F \\
&\leq n\cdot\epsilon
\end{align}
where the last inequality follows from the assumption \eqref{eqn;frob_est_assump1}.
\end{proof}

\subsection{LoRA's low bound\label{app:lora_bound}}
We demonstrate here the short derivation leading to the results of equation~\eqref{eq:lora_bound}.
\begin{proof}
By definition, the forbenius norm of a matrix $X\in\mathbb{R}^{n\times n}$, $\vert\vert X\vert\vert_F$ is invariant under left and right multiplications by any orthogonal matrices $P\in\mathbb{R}^{n\times n}$ and $Q\in\mathbb{R}^{n\times n}$, i.e. $\vert\vert X\vert\vert_F = \vert\vert PXQ\vert\vert_F$. Then, given the k-truncated SVD of $M=U\Sigma_k V^T$ with $U, V \in\mathbb{R}^{n\times n}$ and $\Sigma_k \in\mathbb{R}^{n\times n}$ diagonal with elements above the $k$-th being 0, $U$ and $V$ are orthogonal matrices by definition.
We then have the following,
\begin{align}
    \vert\vert X - M \vert\vert_F & = \vert\vert U(X-M)V^T\vert\vert_F \\
    & = \vert\vert \Sigma - \Sigma_k\vert\vert_F \\
    & = \sum_{j=k+1}^r\sigma_j^2
\end{align}
where $\Sigma\in\mathbb{R}^{n\times n}$ is diagonal and contains the $n$ singular values of $X$ by decreasing order and $\sigma_j$ denotes the $j$-th element of $\Sigma$. 

Since by the SVD definition, the best rank-$k$ approximation of $W$ is $M$, given LoRA's rank-$k$ approximation of $W$ by the matrix multiplication $BA$ where $B \in\mathbb{R}^{n\times k}$ and $A \in\mathbb{R}^{k\times n}$ we have
\begin{align}
    \vert\vert X - M \vert\vert_F &\leq \vert\vert X - BA \vert\vert_F\\
     \sum_{j=k+1}^r\sigma_j^2 & \leq \vert\vert X - BA \vert\vert_F.
\end{align}
\end{proof}

\section{Detailed results}
\subsection{Vision language: CLIP\label{app:clipresults}}
We report per dataset accuracies for NoLA, VeRA, LoRA, standard fine-tuning (FT) and \method{} in for the CLIP ViT-B/32 ViT-L/14 and ViT-H/14 architectures on 22 datasets in Tables~\ref{tab:resultsdetailclipvitb32},~\ref{tab:resultsdetailclipvitl14}and~\ref{tab:resultsdetailclipvitlh4} respectively.
\begin{table}[t]
\captionsetup{font=footnotesize}
\caption{Detailed accuracy results per dataset, fine-tuning the vision and language backbones of CLIP-ViT-B/32.
Highest performance and those within a range of $0.1$ in each section are highlighted in bold.\label{tab:resultsdetailclipvitb32}}
    \centering
    \setlength{\tabcolsep}{1pt} 
    \scriptsize
    \begin{tabularx}{\linewidth}{@{\extracolsep{\fill}} ll | cccccccccccccccccccccc c}
    \toprule
     & Method & \rotatebox[origin=c]{90}{Cars} & \rotatebox[origin=c]{90}{DTD} & \rotatebox[origin=c]{90}{EuroSAT} & \rotatebox[origin=c]{90}{GTSRB} & \rotatebox[origin=c]{90}{MNIST} & \rotatebox[origin=c]{90}{RESISC45} & \rotatebox[origin=c]{90}{SUN397} & \rotatebox[origin=c]{90}{SVHN} & \rotatebox[origin=c]{90}{CIFAR10} & \rotatebox[origin=c]{90}{CIFAR100} & \rotatebox[origin=c]{90}{ImageNet} & \rotatebox[origin=c]{90}{STL10} & \rotatebox[origin=c]{90}{Food101} & \rotatebox[origin=c]{90}{Caltech256} & \rotatebox[origin=c]{90}{FGVCAircraft} & \rotatebox[origin=c]{90}{Flowers102} & \rotatebox[origin=c]{90}{OxfordIIITPet} & \rotatebox[origin=c]{90}{CUB200} & \rotatebox[origin=c]{90}{PascalVOC} & \rotatebox[origin=c]{90}{Country211} & \rotatebox[origin=c]{90}{Caltech101} & \rotatebox[origin=c]{90}{UCF101} & \rotatebox[origin=c]{90}{Average}\tabularnewline
    \midrule
 \multirow{5}{*}{1 shot} & NoLA & 51.6 & 44.5 & 72.8 & 54.3 & 76.3 & 64.1 & 53.8 & 31.1 & 81.3 & 62.7 & 49.7 & 90.4 & 61.9 & 76.6 & 19.0 & 62.5 & 69.7 & 41.8 & 69.1 & 5.3 & 84.9 & 61.4 & 58.4 \\
 & VeRA256 & \textbf{60.9} & 47.7 & \textbf{76.8} & 47.4 & 71.7 & 67.4 & \textbf{64.9} & \textbf{47.5} & \textbf{90.4} & \textbf{71.7} & \textbf{63.7} & \textbf{97.4} & \textbf{83.5} & \textbf{83.3} & \textbf{22.1} & 68.5 & \textbf{88.3} & \textbf{54.4} & \textbf{77.6} & \textbf{17.6} & 87.5 & 64.9 & \textbf{66.1} \\
 & LoRA32 & 51.9 & 46.3 & 73.2 & 61.4 & 73.7 & 67.9 & 53.9 & 30.6 & 79.8 & 63.9 & 51.7 & 89.5 & 63.5 & 78.1 & 19.1 & 65.3 & 69.9 & 43.0 & 67.1 & 5.6 & 85.2 & 63.5 & 59.3 \\
 & RandLoRA6 & 53.6 & \textbf{50.3} & 73.1 & 61.4 & \textbf{78.5} & \textbf{72.6} & 59.3 & 29.4 & 80.8 & 67.1 & 57.4 & 92.6 & 69.8 & 81.5 & 21.7 & \textbf{71.3} & 75.0 & 48.5 & 67.6 & 8.5 & \textbf{88.3} & \textbf{67.0} & 62.5 \\
 & FT & 51.4 & 46.8 & 67.3 & \textbf{62.8} & 77.4 & 69.9 & 57.2 & 20.0 & 68.3 & 61.1 & 52.2 & 83.0 & 66.7 & 79.5 & 19.0 & 68.7 & 70.0 & 46.5 & 59.0 & 7.4 & 86.1 & 66.6 & 58.5 \\
\midrule
\multirow{5}{*}{2 shots} & NoLA & 57.1 & 54.3 & 82.8 & 63.6 & 83.2 & 69.7 & 57.9 & 32.2 & 80.3 & 68.5 & 51.0 & 92.2 & 67.2 & 80.4 & 24.3 & 72.7 & 80.8 & 47.3 & 57.9 & 7.4 & 85.2 & 65.4 & 62.8 \\
 & VeRA256 & \textbf{62.1} & 49.5 & 71.0 & 50.5 & 72.2 & 68.1 & \textbf{64.8} & \textbf{50.7} & \textbf{91.7} & \textbf{73.1} & \textbf{63.7} & \textbf{97.5} & \textbf{84.2} & \textbf{84.0} & 22.1 & 69.9 & \textbf{89.2} & 54.8 & \textbf{73.8} & \textbf{17.7} & \textbf{89.2} & 65.0 & 66.6 \\
 & LoRA32 & 53.7 & 56.9 & 82.0 & 62.6 & 82.8 & 71.9 & 60.1 & 36.8 & 84.2 & 71.5 & 52.9 & 94.1 & 73.6 & 82.8 & 22.4 & 73.8 & 84.2 & 48.0 & 61.7 & 9.0 & 87.8 & 67.4 & 64.6 \\
 & RandLoRA6 & 59.5 & \textbf{60.4} & \textbf{83.4} & 73.7 & \textbf{85.2} & \textbf{74.9} & 62.0 & 30.0 & 82.6 & 72.0 & 57.7 & 94.5 & 72.0 & 83.8 & \textbf{28.6} & 80.8 & 83.7 & 54.3 & 62.3 & 9.8 & 89.0 & \textbf{71.7} & \textbf{66.9} \\
 & FT & 58.5 & 57.7 & 82.9 & \textbf{76.7} & 84.8 & 74.4 & 60.3 & 23.0 & 69.4 & 68.3 & 53.9 & 87.3 & 69.1 & 83.0 & 26.2 & \textbf{81.0} & 79.1 & \textbf{55.2} & 53.2 & 9.3 & \textbf{89.2} & \textbf{71.6} & 64.3 \\
\midrule
\multirow{5}{*}{4 shots} & NoLA & 60.1 & 58.1 & 86.9 & 67.7 & 87.5 & 75.0 & 61.0 & 45.3 & 87.2 & 69.3 & 51.4 & 91.3 & 72.3 & 81.2 & 26.0 & 80.7 & 84.1 & 51.6 & 69.0 & 9.3 & 87.3 & 68.0 & 66.8 \\
 & VeRA256 & 61.8 & 49.6 & 79.7 & 52.5 & 73.2 & 69.6 & \textbf{64.9} & \textbf{52.2} & \textbf{92.3} & \textbf{73.9} & \textbf{64.2} & \textbf{97.5} & \textbf{84.9} & 83.8 & 21.9 & 70.4 & \textbf{89.5} & 54.9 & \textbf{75.8} & \textbf{17.8} & 89.4 & 65.6 & 67.5 \\
 & LoRA32 & 57.0 & 60.4 & 86.7 & 59.0 & 86.5 & 73.5 & 62.3 & 46.4 & 87.1 & 71.1 & 52.5 & 93.6 & 76.3 & 83.2 & 24.2 & 77.2 & 84.7 & 50.9 & 69.5 & 11.2 & 88.4 & 67.1 & 66.8 \\
 & RandLoRA6 & 63.1 & \textbf{63.2} & \textbf{87.9} & 77.4 & \textbf{88.2} & \textbf{80.3} & \textbf{65.0} & 47.8 & 87.6 & 72.9 & 55.8 & 93.2 & 74.8 & \textbf{84.1} & 31.1 & 87.8 & 85.0 & 58.8 & 70.3 & 10.7 & \textbf{89.8} & \textbf{75.3} & \textbf{70.4} \\
 & FT & \textbf{65.2} & 60.3 & 85.4 & \textbf{82.5} & 87.0 & 80.1 & 64.1 & 41.1 & 78.9 & 70.8 & 54.0 & 84.3 & 72.0 & 83.2 & \textbf{34.1} & \textbf{89.5} & 80.1 & \textbf{60.1} & 62.5 & 10.0 & 89.6 & 73.8 & 68.6 \\
\midrule
\multirow{5}{*}{16 shots} & NoLA & 66.2 & 66.5 & 92.3 & 73.6 & 91.2 & 81.2 & 64.4 & \textbf{74.9} & 92.1 & 74.3 & 54.0 & 95.0 & 77.3 & 84.0 & 30.4 & 86.0 & 89.6 & 61.1 & 73.5 & 12.0 & 88.2 & 73.7 & 72.8 \\
 & VeRA256 & 62.9 & 51.4 & 82.4 & 53.2 & 75.8 & 70.5 & 66.3 & 57.0 & \textbf{93.3} & 73.9 & \textbf{64.6} & \textbf{97.9} & \textbf{85.2} & 85.6 & 22.3 & 71.6 & \textbf{90.9} & 55.8 & \textbf{76.4} & \textbf{18.1} & 89.2 & 65.7 & 68.6 \\
 & LoRA32 & 69.6 & 64.8 & 87.5 & 61.2 & 91.2 & 79.8 & 65.0 & 71.6 & 93.0 & 75.7 & 54.9 & 95.8 & 77.3 & 85.8 & 33.7 & 83.3 & 89.6 & 64.4 & 75.2 & 12.1 & 88.5 & 76.3 & 72.6 \\
 & RandLoRA6 & 71.9 & \textbf{70.2} & \textbf{94.2} & 81.5 & 94.1 & 84.9 & \textbf{67.6} & 73.7 & 92.0 & \textbf{77.0} & 56.8 & 95.0 & 80.1 & \textbf{86.9} & 35.1 & 91.3 & 89.3 & 68.6 & 75.5 & 12.2 & \textbf{90.9} & \textbf{79.3} & 75.8 \\
 & FT & \textbf{74.0} & 69.8 & 93.2 & \textbf{87.5} & \textbf{94.3} & \textbf{86.7} & 67.2 & 74.1 & 89.8 & 76.3 & 56.2 & 92.7 & 78.6 & \textbf{86.9} & \textbf{39.1} & \textbf{93.2} & 89.0 & \textbf{70.1} & 74.9 & 12.1 & \textbf{90.9} & 78.9 & \textbf{76.2} \\
\midrule
\multirow{5}{*}{50\%} & NoLA & 69.7 & 68.9 & 98.6 & 93.9 & 98.7 & 91.6 & 64.9 & 93.0 & 97.1 & 79.0 & 56.9 & 97.8 & 81.0 & 86.3 & 44.2 & 81.9 & 89.6 & 62.3 & 85.6 & 14.4 & 88.9 & 78.0 & 78.3 \\
 & VeRA256 & 63.7 & 62.4 & 95.5 & 79.2 & 92.8 & 81.1 & 66.3 & 75.6 & 95.2 & 76.3 & \textbf{64.6} & \textbf{97.9} & 85.6 & 87.9 & 25.6 & 72.1 & 88.8 & 56.6 & 85.4 & \textbf{18.1} & 93.3 & 70.7 & 74.3 \\
 & LoRA32 & 71.9 & 71.3 & 98.4 & 94.7 & 98.8 & 93.0 & 65.6 & 93.7 & 97.4 & 81.5 & 59.5 & 97.7 & 85.4 & 88.1 & 45.3 & 85.8 & 89.2 & 65.2 & 86.5 & 14.1 & 88.5 & 80.2 & 79.6 \\
 & RandLoRA6 & \textbf{78.0} & \textbf{73.6} & 98.5 & 95.5 & 99.0 & 94.0 & \textbf{67.4} & 94.6 & \textbf{97.7} & 84.4 & 62.4 & 97.9 & \textbf{87.6} & \textbf{89.5} & 56.3 & 88.5 & 90.0 & \textbf{70.3} & 86.5 & 14.6 & \textbf{95.3} & \textbf{82.5} & \textbf{82.0} \\
 & FT & \textbf{78.0} & 72.4 & \textbf{98.7} & \textbf{96.2} & \textbf{99.1} & \textbf{94.5} & 67.0 & \textbf{95.0} & 97.6 & \textbf{84.8} & 62.1 & \textbf{98.0} & 86.6 & 89.2 & \textbf{57.4} & \textbf{89.1} & \textbf{91.1} & 69.0 & \textbf{87.2} & 14.6 & 94.9 & 81.8 & \textbf{82.0} \\
\midrule
\multirow{5}{*}{100\%} & NoLA & 73.6 & 73.5 & 98.8 & 95.2 & 99.0 & 93.3 & 66.4 & 94.2 & 97.6 & 80.3 & 57.5 & 98.1 & 82.0 & 87.5 & 51.1 & 89.1 & 90.8 & 67.1 & 86.5 & 15.9 & 90.1 & 78.4 & 80.3 \\
 & VeRA256 & 63.7 & 62.5 & 95.2 & 79.5 & 92.2 & 80.6 & 66.3 & 75.4 & 95.2 & 76.2 & \textbf{64.6} & 98.1 & 85.6 & 87.8 & 25.4 & 77.3 & 90.6 & 56.8 & 85.9 & \textbf{18.1} & 93.8 & 70.3 & 74.6 \\
 & LoRA32 & 77.3 & 76.7 & 98.6 & 95.3 & 99.1 & 94.4 & 67.1 & 95.2 & 97.9 & 83.8 & 60.5 & \textbf{98.4} & 87.8 & 89.2 & 59.5 & 91.4 & 91.1 & 70.7 & 87.7 & 15.9 & 89.6 & 82.0 & 82.2 \\
 & RandLoRA6 & 83.1 & \textbf{78.9} & \textbf{99.0} & 96.1 & \textbf{99.3} & 95.4 & \textbf{69.5} & 95.5 & \textbf{98.1} & \textbf{87.0} & 63.8 & \textbf{98.4} & \textbf{89.4} & \textbf{90.9} & 67.1 & 93.7 & 91.0 & \textbf{75.2} & \textbf{88.0} & 16.8 & 95.6 & \textbf{85.1} & \textbf{84.4} \\
 & FT & \textbf{84.4} & 77.7 & 98.9 & \textbf{96.8} & 99.2 & \textbf{96.0} & 69.0 & \textbf{96.0} & 97.9 & 86.9 & 63.7 & \textbf{98.5} & 88.8 & 90.8 & \textbf{68.1} & \textbf{94.8} & \textbf{91.2} & 74.8 & \textbf{88.0} & 16.3 & \textbf{95.8} & 84.6 & \textbf{84.5} \\
    \bottomrule
    \end{tabularx}
\end{table}

\begin{table}[t]
\captionsetup{font=footnotesize}
\caption{Detailed accuracy results per dataset, fine-tuning the vision and language backbones of CLIP-ViT-L/14.
Highest performance and those within a range of $0.1$ in each section are highlighted in bold.\label{tab:resultsdetailclipvitl14}}
    \centering
    \setlength{\tabcolsep}{1pt} 
    \scriptsize
    \begin{tabularx}{\linewidth}{@{\extracolsep{\fill}} ll | cccccccccccccccccccccc c}
    \toprule
     & Method & \rotatebox[origin=c]{90}{Cars} & \rotatebox[origin=c]{90}{DTD} & \rotatebox[origin=c]{90}{EuroSAT} & \rotatebox[origin=c]{90}{GTSRB} & \rotatebox[origin=c]{90}{MNIST} & \rotatebox[origin=c]{90}{RESISC45} & \rotatebox[origin=c]{90}{SUN397} & \rotatebox[origin=c]{90}{SVHN} & \rotatebox[origin=c]{90}{CIFAR10} & \rotatebox[origin=c]{90}{CIFAR100} & \rotatebox[origin=c]{90}{ImageNet} & \rotatebox[origin=c]{90}{STL10} & \rotatebox[origin=c]{90}{Food101} & \rotatebox[origin=c]{90}{Caltech256} & \rotatebox[origin=c]{90}{FGVCAircraft} & \rotatebox[origin=c]{90}{Flowers102} & \rotatebox[origin=c]{90}{OxfordIIITPet} & \rotatebox[origin=c]{90}{CUB200} & \rotatebox[origin=c]{90}{PascalVOC} & \rotatebox[origin=c]{90}{Country211} & \rotatebox[origin=c]{90}{Caltech101} & \rotatebox[origin=c]{90}{UCF101} & \rotatebox[origin=c]{90}{Average}\tabularnewline
    \midrule
\multirow{5}{*}{1 shot} & NoLA & 72.7 & 61.1 & 81.5 & 76.4 & 89.3 & 78.6 & 67.3 & \textbf{76.1} & 94.0 & 77.9 & 70.3 & 98.8 & 87.5 & 88.3 & 41.1 & 85.0 & 90.4 & 63.4 & 71.7 & 18.0 & 90.4 & 76.6 & 75.3 \\
 & VeRA256 & \textbf{78.5} & 55.6 & 75.3 & 55.0 & 88.8 & 73.2 & 68.8 & 67.8 & \textbf{96.6} & 80.5 & 75.5 & \textbf{99.4} & \textbf{93.2} & 88.9 & 34.3 & 80.6 & \textbf{93.8} & 64.2 & \textbf{78.8} & \textbf{32.0} & 86.8 & 73.6 & 74.6 \\
 & LoRA32 & 74.9 & 62.3 & 81.0 & 76.5 & 91.7 & 79.5 & 68.3 & 74.7 & 92.8 & 78.9 & 71.4 & 98.6 & 87.9 & 89.4 & \textbf{44.0} & 89.5 & 88.7 & 66.3 & 68.5 & 19.3 & 90.3 & 77.7 & 76.0 \\
 & \method{}10 & 76.8 & \textbf{63.1} & \textbf{83.5} & 72.5 & \textbf{92.7} & 81.6 & \textbf{74.7} & 74.2 & 95.0 & \textbf{83.0} & \textbf{76.2} & 99.3 & 91.6 & \textbf{92.1} & 43.2 & 89.1 & 91.0 & 68.8 & 74.9 & 27.2 & 90.3 & \textbf{82.7} & \textbf{78.3} \\
 & FT & 73.6 & 62.4 & 81.2 & \textbf{78.4} & \textbf{92.8} & \textbf{83.8} & 71.5 & 68.3 & 91.0 & 81.3 & 73.2 & 98.6 & 88.4 & 91.6 & 41.8 & \textbf{90.5} & 88.7 & \textbf{68.9} & 66.1 & 23.0 & \textbf{90.7} & 82.5 & 76.7 \\
\midrule
\multirow{5}{*}{2 shots} & NoLA & 74.0 & 66.7 & 81.1 & 81.2 & 93.2 & 82.4 & 68.0 & 78.3 & 93.3 & 80.8 & 66.3 & 98.2 & 88.0 & 89.4 & 39.6 & 92.5 & 93.9 & 64.8 & 75.2 & 20.5 & 91.2 & 76.6 & 77.1 \\
 & VeRA256 & 78.1 & 55.8 & 75.3 & 55.7 & 90.0 & 73.5 & 68.6 & 67.0 & \textbf{96.6} & 81.3 & \textbf{75.6} & \textbf{99.4} & \textbf{93.2} & 89.0 & 34.8 & 81.5 & \textbf{94.4} & 64.0 & \textbf{79.2} & \textbf{32.2} & 86.8 & 74.1 & 74.8 \\
 & LoRA32 & 77.3 & 68.1 & 84.7 & 82.7 & \textbf{95.2} & 84.2 & 69.9 & \textbf{78.5} & 92.4 & 81.6 & 68.7 & 97.8 & 88.7 & 90.0 & 46.4 & 94.5 & 91.8 & 69.3 & 72.6 & 21.0 & 91.7 & 79.5 & 78.5 \\
 & \method{}10 & 78.5 & 70.4 & \textbf{85.1} & 80.4 & 94.7 & 85.8 & \textbf{74.9} & 78.2 & 95.9 & \textbf{84.1} & 74.4 & \textbf{99.5} & 91.9 & \textbf{92.5} & 46.1 & 94.5 & 93.9 & 71.5 & 75.8 & 28.1 & 91.7 & 83.6 & \textbf{80.5} \\
 & FT & \textbf{79.6} & \textbf{70.5} & 83.8 & \textbf{84.0} & 94.0 & \textbf{86.5} & 73.2 & 78.1 & 92.5 & 82.7 & 72.1 & 99.2 & 89.2 & 91.6 & \textbf{47.2} & \textbf{96.5} & 93.1 & \textbf{73.5} & 72.7 & 24.1 & \textbf{91.9} & \textbf{84.1} & 80.0 \\
\midrule
\multirow{5}{*}{4 shots} & NoLA & 75.2 & 70.0 & 87.4 & 85.5 & 95.5 & 84.4 & 69.2 & 82.5 & 94.8 & 82.2 & 66.2 & 97.9 & 89.3 & 89.7 & 44.5 & 93.1 & 94.2 & 67.3 & 77.0 & 23.0 & 91.3 & 77.2 & 79.0 \\
 & VeRA256 & 77.9 & 56.7 & 77.8 & 56.0 & 91.3 & 74.1 & 69.8 & 68.0 & 96.9 & 81.4 & \textbf{75.9} & \textbf{99.5} & \textbf{93.2} & 89.1 & 35.1 & 81.1 & 94.6 & 64.2 & \textbf{79.3} & \textbf{32.1} & 86.9 & 74.2 & 75.2 \\
 & LoRA32 & 77.2 & 71.8 & 88.4 & 86.2 & 95.9 & 86.3 & 70.5 & \textbf{84.3} & 95.1 & 82.4 & 68.7 & 97.5 & 90.2 & 90.8 & \textbf{47.4} & 95.5 & 93.7 & 70.6 & 75.8 & 23.2 & 91.8 & 81.4 & 80.2 \\
 & \method{}10 & 79.3 & 73.6 & 89.2 & 85.2 & \textbf{96.4} & 87.8 & \textbf{74.6} & 80.9 & \textbf{97.3} & \textbf{85.1} & 72.6 & 99.3 & 92.4 & 92.4 & 47.1 & 93.7 & \textbf{94.8} & 71.0 & 79.1 & 29.2 & 91.7 & 84.6 & \textbf{81.7} \\
 & FT & \textbf{79.7} & \textbf{74.6} & \textbf{90.0} & \textbf{90.1} & 96.0 & \textbf{88.8} & 73.5 & 82.5 & 94.2 & 84.2 & 71.6 & 98.1 & 89.8 & \textbf{92.7} & 43.3 & \textbf{97.3} & 93.7 & \textbf{76.0} & 78.2 & 25.3 & \textbf{92.3} & \textbf{84.8} & \textbf{81.7} \\
\midrule
\multirow{5}{*}{16 shots} & NoLA & 82.8 & 72.0 & 93.7 & 86.4 & 96.7 & 87.3 & 72.2 & 87.8 & 97.0 & 84.2 & 69.1 & 98.7 & 90.5 & 93.0 & 53.5 & 96.2 & 94.6 & 78.8 & \textbf{83.7} & 23.6 & 90.3 & 82.7 & 82.5 \\
 & VeRA256 & 80.5 & 56.1 & 82.6 & 56.2 & 93.9 & 74.4 & 71.9 & 69.8 & \textbf{97.2} & 83.0 & \textbf{76.3} & \textbf{99.5} & \textbf{93.5} & 90.3 & 38.3 & 82.3 & 94.8 & 68.3 & 80.2 & \textbf{32.8} & 89.1 & 77.2 & 76.7 \\
 & LoRA32 & 85.7 & 74.8 & 94.2 & 88.1 & 97.1 & 88.9 & 73.3 & \textbf{88.7} & 96.9 & 85.8 & 70.9 & 99.0 & 91.2 & 93.2 & 56.7 & 97.5 & 94.2 & 82.6 & 82.1 & 23.6 & 90.8 & 85.5 & 83.7 \\
 & \method{}10 & 86.6 & 76.0 & 94.9 & 87.4 & 97.2 & 89.4 & \textbf{76.5} & 86.4 & 97.0 & 86.5 & 74.5 & 99.2 & 92.3 & \textbf{94.4} & 57.4 & 97.8 & \textbf{95.3} & 83.9 & 82.4 & 25.3 & 91.7 & \textbf{88.5} & 84.6 \\
 & FT & \textbf{87.5} & \textbf{78.4} & \textbf{95.7} & \textbf{91.7} & \textbf{97.7} & \textbf{91.2} & 75.6 & 87.4 & 94.6 & \textbf{87.3} & 73.5 & 98.3 & 91.4 & 94.1 & \textbf{61.1} & \textbf{98.4} & 94.2 & \textbf{85.0} & 82.3 & 25.8 & \textbf{92.9} & 88.2 & \textbf{85.1} \\
\midrule
\multirow{5}{*}{50\%} & NoLA & 84.4 & 78.0 & 98.6 & 96.4 & \textbf{99.3} & 95.2 & 72.7 & 96.3 & 99.1 & 89.2 & 73.0 & \textbf{99.5} & 93.0 & 94.4 & 57.9 & 96.3 & 95.6 & 79.3 & 91.5 & 26.7 & 91.3 & 87.0 & 86.1 \\
 & VeRA256 & 81.7 & 68.8 & 95.8 & 88.5 & 97.0 & 86.8 & 71.8 & 90.5 & 98.1 & 85.0 & 76.2 & \textbf{99.5} & 93.9 & 93.8 & 44.5 & 87.7 & 94.4 & 70.2 & 88.6 & \textbf{32.9} & 94.3 & 80.9 & 82.8 \\
 & LoRA32 & 88.2 & 81.2 & 98.8 & \textbf{96.9} & 99.1 & 96.0 & 74.1 & 96.5 & \textbf{99.2} & 90.3 & 75.4 & \textbf{99.5} & 94.4 & 95.6 & 68.4 & \textbf{97.2} & 94.9 & 83.2 & 91.0 & 25.5 & 94.1 & 88.4 & 87.6 \\
 & \method{}10 & \textbf{89.9} & \textbf{82.3} & 98.8 & 96.8 & \textbf{99.4} & 96.0 & \textbf{76.7} & 96.8 & \textbf{99.2} & \textbf{91.6} & \textbf{78.3} & \textbf{99.5} & \textbf{94.7} & 95.6 & 69.0 & 96.9 & \textbf{95.7} & 83.9 & \textbf{92.1} & 27.5 & \textbf{96.9} & 90.5 & \textbf{88.5} \\
 & FT & 89.7 & 79.0 & \textbf{99.1} & 96.3 & 99.3 & \textbf{96.8} & 76.0 & \textbf{97.0} & \textbf{99.2} & 91.2 & 77.3 & 99.4 & 94.3 & \textbf{95.8} & \textbf{69.6} & \textbf{97.1} & 95.0 & \textbf{84.6} & 91.9 & 26.8 & \textbf{96.9} & \textbf{90.8} & 88.3 \\
\midrule
\multirow{5}{*}{100\%} & NoLA & 87.5 & 82.5 & 99.0 & 96.8 & 99.3 & 96.3 & 75.0 & 96.6 & 99.3 & 90.4 & 73.6 & \textbf{99.7} & 93.8 & 95.2 & 74.0 & 98.5 & 95.1 & 83.2 & 91.5 & 28.5 & 93.9 & 88.0 & 88.1 \\
 & VeRA256 & 81.6 & 67.9 & 96.1 & 88.6 & 97.2 & 85.8 & 71.7 & 90.2 & 98.2 & 85.1 & 77.0 & 99.5 & 93.8 & 93.9 & 44.5 & 93.0 & 94.9 & 70.3 & 89.2 & \textbf{32.8} & 96.4 & 81.5 & 83.1 \\
 & LoRA32 & 89.2 & 83.9 & \textbf{99.2} & \textbf{97.4} & 99.3 & 96.8 & 75.9 & 95.8 & \textbf{99.3} & 91.4 & 76.1 & \textbf{99.7} & 95.2 & 95.8 & 78.6 & 98.4 & 95.2 & 85.3 & 91.7 & 27.9 & 96.1 & 90.2 & 89.0 \\
 & \method{}10 & \textbf{90.8} & \textbf{84.6} & 99.0 & 96.6 & \textbf{99.5} & 96.9 & \textbf{77.8} & 97.0 & \textbf{99.4} & \textbf{92.8} & \textbf{79.0} & \textbf{99.7} & \textbf{95.4} & \textbf{96.5} & 79.6 & 98.9 & \textbf{95.4} & \textbf{87.1} & 92.5 & 30.4 & 96.8 & \textbf{93.1} & \textbf{90.0} \\
 & FT & 90.4 & 84.4 & \textbf{99.1} & 97.1 & 99.3 & \textbf{97.2} & 77.2 & \textbf{97.3} & 99.2 & 92.4 & 78.1 & \textbf{99.6} & 94.9 & 96.2 & \textbf{81.5} & \textbf{99.1} & 94.8 & 86.9 & \textbf{92.6} & 29.3 & \textbf{97.0} & 92.6 & 89.8 \\
    \bottomrule
    \end{tabularx}
\end{table}

\begin{table}[t]
\captionsetup{font=footnotesize}
\caption{Detailed accuracy results per dataset, fine-tuning the vision and language backbones of CLIP-ViT-H/14.
Highest performance and those within a range of $0.1$ in each section are highlighted in bold.\label{tab:resultsdetailclipvitlh4}}
    \centering
    \setlength{\tabcolsep}{1pt} 
    \scriptsize
    \begin{tabularx}{\linewidth}{@{\extracolsep{\fill}} ll | cccccccccccccccccccccc c}
    \toprule
     & Method & \rotatebox[origin=c]{90}{Cars} & \rotatebox[origin=c]{90}{DTD} & \rotatebox[origin=c]{90}{EuroSAT} & \rotatebox[origin=c]{90}{GTSRB} & \rotatebox[origin=c]{90}{MNIST} & \rotatebox[origin=c]{90}{RESISC45} & \rotatebox[origin=c]{90}{SUN397} & \rotatebox[origin=c]{90}{SVHN} & \rotatebox[origin=c]{90}{CIFAR10} & \rotatebox[origin=c]{90}{CIFAR100} & \rotatebox[origin=c]{90}{ImageNet} & \rotatebox[origin=c]{90}{STL10} & \rotatebox[origin=c]{90}{Food101} & \rotatebox[origin=c]{90}{Caltech256} & \rotatebox[origin=c]{90}{FGVCAircraft} & \rotatebox[origin=c]{90}{Flowers102} & \rotatebox[origin=c]{90}{OxfordIIITPet} & \rotatebox[origin=c]{90}{CUB200} & \rotatebox[origin=c]{90}{PascalVOC} & \rotatebox[origin=c]{90}{Country211} & \rotatebox[origin=c]{90}{Caltech101} & \rotatebox[origin=c]{90}{UCF101} & \rotatebox[origin=c]{90}{Average}\tabularnewline
    \midrule
    \multirow{5}{*}{1 shot} & NoLA & 92.0 & \textbf{71.8} & 80.9 & 78.7 & \textbf{90.1} & 82.2 & 70.6 & \textbf{60.2} & 95.1 & 82.7 & 73.1 & 98.0 & 88.0 & 90.3 & 46.4 & 91.5 & 91.2 & 76.2 & 69.7 & 17.5 & 91.5 & 78.1 & 78.0 \\
 & VeRA1024 & \textbf{93.8} & 69.4 & 73.8 & 65.1 & 90.0 & 73.2 & 74.9 & 54.2 & \textbf{98.2} & 85.5 & 77.6 & \textbf{99.1} & \textbf{92.8} & 91.5 & 46.4 & 81.6 & 92.0 & \textbf{82.2} & \textbf{80.0} & \textbf{29.9} & 89.7 & 79.2 & 78.2 \\
 & LoRA32 & 92.7 & 70.4 & \textbf{84.6} & \textbf{79.8} & 88.2 & 84.7 & 71.2 & 59.9 & 95.6 & 83.3 & 71.9 & 96.7 & 87.5 & 90.6 & 49.0 & 95.2 & 90.4 & 76.6 & 70.2 & 18.4 & 91.8 & 79.8 & 78.6 \\
 & \method{}10 & 93.0 & 71.0 & 79.8 & 79.6 & \textbf{90.2} & 84.3 & \textbf{78.3} & 55.8 & 97.2 & \textbf{85.9} & \textbf{78.0} & 98.1 & 90.9 & \textbf{92.5} & \textbf{49.9} & 94.3 & \textbf{92.2} & 78.3 & 66.1 & 26.4 & \textbf{92.3} & 82.1 & \textbf{79.8} \\
 & FT & 92.2 & 69.9 & 81.7 & \textbf{79.8} & 88.2 & \textbf{85.3} & 76.2 & 56.2 & 95.8 & 83.3 & 73.3 & 97.6 & 89.1 & 91.7 & \textbf{49.8} & \textbf{95.6} & 90.6 & 76.9 & 66.1 & 24.6 & 91.9 & \textbf{82.4} & 79.0 \\
\midrule
\multirow{5}{*}{2 shots} & NoLA & 92.8 & 71.7 & 89.3 & 87.8 & 91.2 & 83.2 & 71.8 & 75.1 & 96.0 & 84.3 & 68.9 & 95.5 & 88.6 & 91.0 & 50.7 & 94.6 & 92.3 & 79.4 & 71.7 & 20.8 & 91.6 & 78.8 & 80.3 \\
 & VeRA1024 & \textbf{93.8} & 71.1 & 89.7 & 67.0 & 90.3 & 74.3 & \textbf{78.2} & 74.3 & \textbf{98.1} & 85.8 & \textbf{77.3} & \textbf{99.0} & \textbf{92.9} & 91.7 & 47.0 & 82.1 & 92.3 & \textbf{81.7} & \textbf{80.8} & \textbf{30.1} & 89.5 & 79.4 & 80.3 \\
 & LoRA32 & 93.1 & 71.9 & \textbf{93.2} & 84.9 & 92.2 & 86.0 & 72.3 & 77.4 & 96.9 & 83.5 & 70.4 & 94.6 & 87.7 & 91.9 & 51.9 & 97.2 & 91.8 & 77.5 & 75.5 & 20.8 & 92.9 & 83.6 & 81.2 \\
 & \method{}10 & \textbf{93.9} & \textbf{75.8} & 90.7 & 89.3 & 93.5 & \textbf{86.9} & \textbf{78.2} & \textbf{79.0} & 97.5 & \textbf{86.5} & 74.8 & 98.1 & 91.3 & \textbf{92.5} & \textbf{53.6} & \textbf{97.4} & \textbf{93.2} & 81.0 & 72.6 & 27.2 & \textbf{93.7} & 84.1 & \textbf{83.2} \\
 & FT & 93.1 & 74.0 & 90.8 & \textbf{89.9} & \textbf{93.7} & 86.1 & 74.3 & 74.0 & 95.4 & 85.2 & 71.2 & 97.2 & 90.1 & 92.0 & 46.9 & \textbf{97.4} & 92.6 & 78.2 & 71.5 & 25.1 & 93.3 & \textbf{84.5} & 81.7 \\
\midrule
\multirow{5}{*}{4 shots} & NoLA & 93.1 & 73.7 & 92.9 & 86.3 & 94.4 & 85.1 & 72.6 & 80.3 & 96.8 & 84.2 & 69.2 & 97.2 & 89.4 & 91.0 & 55.2 & 95.8 & 92.7 & 80.7 & 78.7 & 23.5 & 91.6 & 82.4 & 82.1 \\
 & VeRA1024 & 93.9 & 71.4 & 92.4 & 67.3 & 92.5 & 74.2 & \textbf{76.6} & 78.0 & \textbf{98.3} & 86.1 & 72.9 & \textbf{99.1} & \textbf{92.8} & 92.8 & 47.6 & 82.0 & \textbf{94.0} & 82.7 & \textbf{82.2} & \textbf{30.8} & 89.9 & 79.6 & 80.8 \\
 & LoRA32 & 93.9 & 73.7 & 94.2 & \textbf{89.5} & 95.6 & 87.8 & 72.5 & \textbf{80.9} & 97.1 & 85.3 & 70.8 & 97.3 & 89.1 & 92.3 & 56.9 & 97.8 & 92.4 & 82.4 & 78.5 & 23.3 & 91.9 & 84.8 & 83.1 \\
 & \method{}10 & \textbf{94.1} & \textbf{78.6} & \textbf{95.5} & \textbf{89.5} & \textbf{95.7} & \textbf{89.8} & \textbf{76.5} & 80.5 & 98.1 & \textbf{87.4} & \textbf{73.5} & 99.0 & 91.6 & 92.7 & \textbf{57.5} & 98.1 & 93.7 & \textbf{83.1} & 78.1 & 28.6 & \textbf{93.3} & \textbf{86.9} & \textbf{84.6} \\
 & FT & 93.8 & 78.1 & 94.0 & 88.5 & \textbf{95.8} & 89.4 & 75.5 & 77.1 & 96.2 & 86.5 & 72.8 & 97.7 & 90.5 & \textbf{93.5} & 51.8 & \textbf{98.4} & 92.8 & 81.5 & 77.2 & 25.2 & 93.0 & 86.3 & 83.4 \\
\midrule
\multirow{5}{*}{16 shots} & NoLA & 93.3 & 76.0 & 95.7 & 90.3 & 96.7 & 88.6 & 75.3 & 87.1 & 98.0 & 87.3 & 71.6 & 98.7 & 90.0 & 92.8 & 61.5 & 98.1 & 93.7 & 86.0 & 81.5 & 23.8 & 92.2 & 85.9 & 84.7 \\
 & VeRA1024 & 94.2 & 77.4 & 94.3 & 81.7 & 94.4 & 85.1 & 77.1 & 82.0 & \textbf{98.4} & 87.8 & 73.8 & \textbf{99.3} & \textbf{91.7} & 94.0 & 61.1 & 94.6 & \textbf{94.5} & 86.4 & 81.2 & \textbf{25.7} & 93.2 & 88.1 & 84.4 \\
 & LoRA32 & 93.5 & 77.7 & 95.3 & 92.5 & 96.6 & 90.2 & 75.7 & 86.8 & 98.2 & 88.3 & 73.2 & 98.5 & 90.4 & 93.9 & \textbf{65.5} & 98.8 & 92.9 & 86.8 & 80.9 & 23.2 & 92.2 & 87.7 & 85.4 \\
 & \method{}10 & \textbf{94.4} & 79.8 & \textbf{95.9} & 92.3 & \textbf{96.9} & \textbf{91.7} & \textbf{78.1} & \textbf{87.4} & 98.1 & \textbf{88.6} & \textbf{75.6} & 99.0 & 91.2 & \textbf{94.6} & 64.5 & \textbf{99.0} & 94.0 & \textbf{87.8} & 80.7 & 25.2 & 92.2 & 89.1 & \textbf{86.2} \\
 & FT & 94.3 & \textbf{80.0} & \textbf{95.8} & \textbf{94.1} & 96.5 & \textbf{91.7} & 77.5 & 85.6 & 97.8 & 87.6 & 75.0 & 98.6 & 90.8 & 94.5 & 64.7 & 98.7 & 93.7 & 87.1 & \textbf{81.9} & 25.2 & \textbf{93.3} & \textbf{89.6} & 86.1 \\
\midrule
\multirow{5}{*}{50\%} & NoLA & 93.0 & 80.8 & \textbf{99.1} & \textbf{96.7} & \textbf{99.2} & 95.2 & 75.2 & 96.2 & 99.2 & 90.7 & 74.7 & 99.3 & 93.0 & 95.3 & 70.5 & 97.5 & 94.5 & 85.2 & 90.9 & 25.9 & 91.7 & 87.4 & 87.8 \\
 & VeRA1024 & 93.8 & 82.0 & \textbf{99.2} & 96.2 & \textbf{99.3} & 96.1 & 76.9 & 96.2 & 99.2 & 91.9 & 76.3 & \textbf{99.5} & 93.6 & 96.2 & 72.7 & 98.3 & \textbf{95.2} & 86.7 & 90.5 & 25.7 & 95.8 & 89.4 & 88.7 \\
 & LoRA32 & 92.7 & 82.1 & 98.8 & 96.5 & \textbf{99.3} & 96.2 & 75.7 & 96.5 & \textbf{99.3} & 91.9 & 77.0 & 99.4 & 94.2 & 96.0 & 74.0 & 97.2 & 94.5 & 86.4 & 89.5 & 25.9 & \textbf{96.2} & 90.5 & 88.6 \\
 & \method{}10 & \textbf{94.8} & \textbf{82.8} & 98.8 & 96.6 & \textbf{99.3} & 96.4 & \textbf{77.7} & \textbf{96.8} & \textbf{99.3} & \textbf{93.0} & \textbf{79.1} & \textbf{99.5} & \textbf{94.6} & \textbf{96.5} & 77.2 & \textbf{98.7} & 94.7 & \textbf{87.3} & \textbf{91.3} & \textbf{28.1} & 96.0 & 90.4 & \textbf{89.5} \\
 & FT & 94.5 & 82.0 & 98.8 & \textbf{96.8} & \textbf{99.3} & \textbf{96.6} & 76.7 & \textbf{96.8} & 99.0 & 91.8 & 77.0 & 99.4 & 94.2 & \textbf{96.4} & \textbf{77.9} & 98.1 & 94.6 & 86.7 & 90.8 & 27.6 & 95.5 & \textbf{91.4} & 89.2 \\
\midrule
\multirow{5}{*}{100\%} & NoLA & 93.3 & 84.2 & \textbf{99.3} & 96.7 & 99.4 & 96.2 & 76.6 & 96.8 & 99.2 & 91.5 & 74.8 & 99.5 & 93.7 & 95.5 & 77.2 & 98.7 & 94.4 & 86.9 & 91.4 & 28.2 & 95.0 & 89.8 & 89.0 \\
 & VeRA1024 & 94.3 & 85.0 & 99.0 & 97.2 & 99.4 & 97.0 & 78.0 & 96.8 & \textbf{99.3} & 92.6 & 76.7 & \textbf{99.6} & 94.3 & 95.9 & 79.7 & \textbf{99.3} & 95.0 & 87.7 & 91.3 & 27.3 & 96.4 & 91.3 & 89.7 \\
 & LoRA32 & 93.1 & 85.8 & 99.1 & 97.3 & \textbf{99.5} & 97.2 & 77.6 & 97.2 & \textbf{99.3} & 93.0 & 77.9 & \textbf{99.5} & 94.9 & 96.6 & 83.7 & 99.0 & 94.4 & 87.5 & \textbf{91.8} & 28.3 & 95.9 & 91.5 & 90.0 \\
 & \method{}10 & 94.7 & \textbf{86.0} & 99.0 & 97.0 & \textbf{99.4} & 97.1 & \textbf{79.4} & \textbf{97.3} & \textbf{99.3} & \textbf{93.5} & \textbf{80.1} & 99.5 & \textbf{95.2} & \textbf{97.1} & 84.1 & \textbf{99.3} & 95.1 & \textbf{88.6} & 91.7 & \textbf{31.2} & \textbf{96.5} & 92.7 & \textbf{90.6} \\
 & FT & \textbf{94.9} & 84.2 & 98.8 & \textbf{97.5} & \textbf{99.5} & \textbf{97.6} & 78.7 & \textbf{97.3} & 99.2 & 92.8 & 77.9 & 99.5 & 94.8 & 96.8 & \textbf{84.4} & \textbf{99.3} & \textbf{95.3} & 88.3 & \textbf{91.8} & 30.1 & \textbf{96.6} & \textbf{93.0} & 90.4 \\
    \bottomrule
    \end{tabularx}
\end{table}

\subsection{Vision only: DinoV2\label{app:dinoresults}}
Table~\ref{tab:resultsdetaildinov2} reports detailed results when fine-tuning DinoV2 on 21 datasets. We use the pre-trained ViT-B/14 architecture and train a linear classifier together with the feature extractor. Compared to the CLIP results ImageNet was removed to promote brevity of the experiments.
\begin{table}[t]
\captionsetup{font=footnotesize}
\caption{Detailed accuracy results per dataset, the DinoV2 ViT-B/14 vision backbone.
Highest performance and those within a range of $0.1$ in each section are highlighted in bold.\label{tab:resultsdetaildinov2}}
    \centering
    \setlength{\tabcolsep}{1pt} 
    \scriptsize
    \begin{tabularx}{\linewidth}{@{\extracolsep{\fill}} ll | ccccccccccccccccccccc c}
    \toprule
    & Method & \rotatebox[origin=c]{90}{Cars} & \rotatebox[origin=c]{90}{DTD} & \rotatebox[origin=c]{90}{EuroSAT} & \rotatebox[origin=c]{90}{GTSRB} & \rotatebox[origin=c]{90}{MNIST} & \rotatebox[origin=c]{90}{RESISC45} & \rotatebox[origin=c]{90}{SUN397} & \rotatebox[origin=c]{90}{SVHN} & \rotatebox[origin=c]{90}{CIFAR10} & \rotatebox[origin=c]{90}{CIFAR100} & \rotatebox[origin=c]{90}{STL10} & \rotatebox[origin=c]{90}{Food101} & \rotatebox[origin=c]{90}{Caltech256} & \rotatebox[origin=c]{90}{FGVCAircraft} & \rotatebox[origin=c]{90}{Flowers102} & \rotatebox[origin=c]{90}{OxfordIIITPet} & \rotatebox[origin=c]{90}{CUB200} & \rotatebox[origin=c]{90}{PascalVOC} & \rotatebox[origin=c]{90}{Country211} & \rotatebox[origin=c]{90}{Caltech101} & \rotatebox[origin=c]{90}{UCF101} & \rotatebox[origin=c]{90}{Average}\tabularnewline
    \midrule
    \multirow{5}{*}{1 shots} &  NoLA & 21.2 & 45.4 & 60.7 & 28.8 & 55.0 & \textbf{49.8} & 46.3 & 14.4 & 73.8 & \textbf{57.3} & 71.7 & 50.5 & \textbf{78.7} & \textbf{19.7} & 98.6 & 74.6 & 62.5 & \textbf{43.6} & 3.1 & 85.5 & \textbf{63.8} & \textbf{52.6} \\
 &  VeRA256 & \textbf{22.5} & 45.6 & 57.9 & 20.1 & 50.7 & 44.6 & \textbf{46.7} & 12.9 & 76.5 & 55.8 & 64.4 & 51.9 & 78.6 & 19.1 & 98.7 & 75.5 & 62.9 & 36.2 & \textbf{3.4} & 84.7 & 63.1 & 51.0 \\
 &  LoRA32 & \textbf{22.6} & 47.2 & 59.3 & 24.8 & 51.7 & 48.7 & 45.9 & 14.6 & \textbf{77.2} & \textbf{57.4} & 64.6 & \textbf{52.4} & 77.5 & \textbf{19.7} & \textbf{98.9} & \textbf{76.5} & \textbf{63.1} & 37.2 & \textbf{3.4} & 85.1 & 62.3 & 51.9 \\
 &  RandLoRA6 & 21.5 & \textbf{47.8} & 57.9 & \textbf{34.5} & \textbf{61.6} & 44.9 & 44.1 & \textbf{16.2} & 56.8 & 54.0 & 66.0 & 47.2 & 76.4 & \textbf{19.6} & 97.8 & 71.8 & 59.9 & 43.4 & 3.0 & \textbf{86.1} & 62.8 & 51.1 \\
 &  FT & 20.8 & 45.5 & \textbf{67.8} & 25.7 & 52.3 & 45.2 & 45.3 & 15.5 & 70.2 & 54.7 & \textbf{75.8} & 50.1 & 75.2 & 19.4 & 98.3 & 70.8 & 60.0 & 36.9 & 3.1 & 84.7 & 60.3 & 51.3 \\
\midrule
\multirow{5}{*}{2 shots} &  NoLA & 41.4 & 57.8 & 64.1 & 43.1 & \textbf{73.5} & \textbf{65.4} & 58.1 & 16.2 & \textbf{90.0} & 74.5 & \textbf{93.9} & 64.5 & 84.9 & 28.0 & \textbf{99.6} & 83.0 & 73.3 & 51.4 & 4.1 & \textbf{90.0} & 72.7 & \textbf{63.3} \\
 &  VeRA256 & 38.2 & 57.4 & 64.0 & 28.9 & 65.0 & 60.8 & 57.3 & 14.4 & 86.0 & 71.6 & 78.9 & \textbf{66.2} & 84.8 & 26.2 & 99.4 & 83.4 & \textbf{74.8} & 44.4 & 4.1 & 88.0 & \textbf{73.7} & 60.4 \\
 &  LoRA32 & 41.1 & 58.4 & 68.3 & 37.7 & 71.4 & 64.7 & \textbf{58.2} & 14.7 & 89.6 & \textbf{74.8} & 87.0 & 66.1 & \textbf{85.3} & 27.0 & \textbf{99.5} & \textbf{86.7} & 73.0 & \textbf{52.1} & \textbf{5.0} & 89.2 & 72.1 & 63.0 \\
 &  RandLoRA6 & 41.9 & \textbf{59.6} & \textbf{69.0} & \textbf{48.6} & 70.2 & 62.2 & 57.1 & \textbf{19.4} & 72.1 & 70.6 & 84.8 & 63.3 & 83.8 & \textbf{29.5} & 98.4 & 80.0 & 71.3 & 49.7 & 3.8 & \textbf{89.9} & 72.1 & 61.8 \\
 &  FT & \textbf{43.1} & 56.0 & 65.7 & 42.6 & 72.1 & 63.1 & 57.5 & 16.1 & 79.0 & 71.0 & 91.3 & 64.6 & 85.1 & 27.7 & 99.3 & 81.1 & 71.8 & 50.6 & 3.9 & 89.2 & 70.5 & 62.0 \\
\midrule
\multirow{5}{*}{4 shots} &  NoLA & 62.9 & \textbf{68.5} & 76.4 & 62.4 & 82.4 & 75.9 & 65.8 & 22.8 & \textbf{94.5} & \textbf{82.6} & \textbf{97.6} & 73.7 & 89.8 & 40.6 & \textbf{99.7} & 89.3 & 82.6 & 65.4 & 6.0 & 90.6 & 79.5 & \textbf{71.9} \\
 &  VeRA256 & 56.1 & 64.2 & 71.5 & 43.2 & 76.1 & 71.6 & 64.8 & 17.6 & 91.4 & 80.9 & 88.7 & 74.7 & 89.3 & 36.0 & \textbf{99.7} & 91.0 & 82.1 & 53.5 & 6.2 & 89.8 & 77.7 & 67.9 \\
 &  LoRA32 & 63.4 & 66.5 & \textbf{79.2} & 61.0 & 79.2 & \textbf{77.6} & \textbf{66.3} & 20.9 & \textbf{94.5} & 82.1 & 94.5 & \textbf{75.0} & 89.4 & 41.9 & \textbf{99.7} & \textbf{91.9} & \textbf{83.5} & \textbf{66.2} & \textbf{6.8} & 89.7 & \textbf{80.6} & \textbf{71.9} \\
 &  RandLoRA6 & 64.6 & 65.3 & 72.2 & \textbf{66.6} & \textbf{86.4} & 77.0 & 65.0 & \textbf{24.8} & 84.0 & 79.4 & 93.1 & 73.0 & 89.8 & \textbf{43.9} & 99.6 & 86.6 & 82.4 & 63.8 & 5.9 & \textbf{91.7} & 78.6 & 71.1 \\
 &  FT & \textbf{65.5} & 67.3 & 73.0 & 62.7 & 85.6 & 73.8 & 66.0 & 20.9 & 88.0 & 81.3 & 94.0 & 73.6 & \textbf{90.2} & 41.4 & \textbf{99.6} & 88.9 & 82.6 & 61.3 & 6.0 & \textbf{91.7} & 79.3 & 71.1 \\
\midrule
\multirow{5}{*}{16 shots} &  NoLA & 86.4 & 79.5 & 91.0 & 88.2 & 94.0 & 87.8 & \textbf{75.1} & 56.9 & 97.2 & \textbf{89.4} & 99.0 & 85.1 & 93.1 & 64.0 & \textbf{99.7} & 93.9 & \textbf{88.8} & \textbf{82.2} & 12.2 & \textbf{95.3} & 87.0 & 83.1 \\
 &  VeRA256 & 81.7 & 78.2 & 88.2 & 60.1 & 88.9 & 83.2 & 73.5 & 30.2 & \textbf{97.4} & 87.6 & 97.4 & 84.4 & 92.6 & 51.3 & \textbf{99.7} & \textbf{94.6} & 88.5 & 72.9 & 11.8 & 91.9 & 85.7 & 78.1 \\
 &  LoRA32 & 87.1 & \textbf{80.5} & \textbf{93.9} & 86.4 & 93.0 & 87.2 & \textbf{75.1} & 44.8 & \textbf{97.4} & 88.8 & \textbf{99.4} & \textbf{85.6} & \textbf{93.5} & 65.2 & \textbf{99.7} & 94.2 & 88.5 & 80.5 & \textbf{12.4} & 94.6 & 87.6 & 82.6 \\
 &  RandLoRA6 & \textbf{88.4} & 79.0 & 92.3 & \textbf{90.3} & \textbf{95.4} & 87.3 & 74.7 & \textbf{57.4} & 97.0 & 88.5 & 98.2 & 85.5 & 93.1 & \textbf{71.5} & \textbf{99.7} & 93.3 & 88.6 & 79.7 & 11.8 & 94.5 & \textbf{87.8} & \textbf{83.5} \\
 &  FT & 87.3 & 78.8 & 92.4 & 88.9 & 95.0 & \textbf{88.9} & 74.7 & 50.4 & 96.8 & 88.6 & 98.8 & 85.3 & 93.4 & 67.1 & \textbf{99.7} & 93.2 & \textbf{88.9} & 77.8 & 11.8 & 94.9 & 87.4 & 82.9 \\
\midrule
\multirow{5}{*}{0.5 shots} &  NoLA & 89.0 & 82.5 & 98.9 & 96.4 & 99.2 & 94.8 & 76.0 & 96.5 & \textbf{99.2} & 93.2 & \textbf{99.6} & 92.5 & \textbf{95.8} & 73.7 & 99.5 & 94.9 & 87.4 & 92.8 & 18.7 & \textbf{97.8} & 88.6 & 88.9 \\
 &  VeRA256 & 84.0 & 80.1 & 97.3 & 89.7 & 97.7 & 92.0 & 74.8 & 88.2 & 99.0 & 92.2 & 99.4 & 91.7 & 94.7 & 68.4 & 99.5 & \textbf{95.1} & 86.9 & 89.9 & 17.6 & 96.0 & 87.5 & 86.7 \\
 &  LoRA32 & 89.7 & 82.8 & \textbf{99.0} & 96.2 & 99.1 & 94.8 & 75.9 & 96.6 & \textbf{99.3} & \textbf{93.7} & \textbf{99.5} & \textbf{93.2} & 95.5 & 72.6 & 98.3 & \textbf{95.0} & \textbf{88.6} & \textbf{93.0} & 19.0 & 97.5 & 90.3 & 89.0 \\
 &  RandLoRA6 & 89.7 & \textbf{83.2} & 98.7 & \textbf{97.1} & \textbf{99.3} & 95.5 & 75.8 & \textbf{97.2} & \textbf{99.2} & 93.4 & \textbf{99.6} & \textbf{93.3} & 95.5 & 75.2 & \textbf{99.7} & 94.9 & 87.5 & 92.8 & 19.6 & 97.6 & 89.3 & \textbf{89.2} \\
 &  FT & \textbf{90.3} & 81.5 & 98.8 & 96.6 & \textbf{99.3} & \textbf{95.8} & \textbf{76.2} & 96.6 & \textbf{99.2} & 93.4 & 99.3 & 93.0 & 95.7 & \textbf{75.3} & 98.9 & 95.0 & 87.4 & 92.4 & \textbf{19.9} & 97.3 & \textbf{90.6} & \textbf{89.2} \\
\midrule
\multirow{5}{*}{1.0 shots} &  NoLA & 92.5 & 85.4 & 98.8 & 96.9 & \textbf{99.3} & 96.1 & 77.8 & 96.8 & \textbf{99.4} & 94.1 & \textbf{99.7} & 93.4 & \textbf{96.2} & 81.8 & \textbf{99.7} & \textbf{95.9} & 90.2 & \textbf{93.6} & 22.3 & \textbf{98.2} & 90.2 & 90.4 \\
 &  VeRA256 & 89.8 & 81.6 & 97.4 & 89.5 & 98.1 & 93.1 & 76.5 & 88.4 & 99.1 & 92.6 & \textbf{99.6} & 92.5 & 95.3 & 75.4 & \textbf{99.7} & 95.8 & 89.7 & 90.6 & 20.5 & 97.1 & 88.2 & 88.1 \\
 &  LoRA32 & 92.7 & 84.6 & 99.1 & 96.3 & \textbf{99.3} & 96.0 & 78.2 & 97.2 & 99.3 & 94.2 & \textbf{99.7} & 93.7 & \textbf{96.3} & 83.3 & \textbf{99.7} & 95.7 & \textbf{90.4} & 92.7 & 20.6 & 97.8 & 91.5 & 90.4 \\
 &  RandLoRA6 & \textbf{93.3} & \textbf{85.5} & 99.0 & \textbf{97.1} & \textbf{99.4} & 96.8 & 77.9 & \textbf{97.5} & \textbf{99.5} & \textbf{94.4} & \textbf{99.7} & \textbf{94.2} & \textbf{96.2} & \textbf{84.0} & \textbf{99.6} & 95.8 & 90.1 & 93.1 & 22.7 & 98.0 & \textbf{92.0} & \textbf{90.8} \\
 &  FT & \textbf{93.4} & \textbf{85.5} & \textbf{99.4} & 96.8 & \textbf{99.3} & \textbf{97.0} & \textbf{78.4} & 97.4 & 99.2 & 94.0 & \textbf{99.6} & 94.1 & \textbf{96.3} & \textbf{83.9} & \textbf{99.7} & 95.8 & 90.1 & 93.1 & \textbf{23.8} & 98.0 & 91.8 & \textbf{90.8} \\
    \bottomrule
    \end{tabularx}
\end{table}

\subsection{Commonsense reasoning}
Table~\ref{tab:comsense} reports detailed accuracy results for the Qwen2, Phi3 and LLama3 language models trained on the commonsense tasks.
See~\ref{app:commonsensedatasets} for details on the datasets and the hyper-parameters used.

\begin{table}[ht]
\centering
\scalebox{0.75}{
\begin{tabular}{lcccccccccl}
\toprule
Method & \% Params & BoolQ & PIQA & SIQA & HellaSwag & WinoGrande & ARC-e & ARC-c & OBQA & Average + $\Delta$ \\
\midrule
\multicolumn{11}{c}{Qwen2 - Zero-shot} \\
\midrule
Zero-shot & 0  & 3.12 & 4.68 & 7.22 & 2.50 & 14.52 & 4.80 & 1.79 & 2.60 & 5.15 \\
\midrule
\multicolumn{11}{c}{Qwen2 - 15k} \\
\midrule
NoLA & 0.05 & 54.16 & 56.91 & 47.65 & 17.36 & 45.46 & 46.55 & 32.51 & 39.80 & 42.55 \\
VeRA1024 & 0.06  & 58.78 & 56.64 & 50.10 & 24.95 & 49.80 & 56.52 & 37.80 & 50.40 & 48.12 \\
LoRA-16 & 1.18   & 62.14 & 62.13 & 58.24 & 27.86 & 49.96 & 62.46 & 44.97 & 58.20 & 53.25 \\
\method{}-10 & 1.18  & 62.14 & 63.49 & 55.32 & 31.16 & 49.96 & 64.27 & 44.97 & 56.60 & 53.49 \textcolor{color3}{+0.24}  \\
LoRA-32 & 2.33 & 59.94 & 62.13 & 56.55 & 30.27 & 41.99 & 64.39 & 46.42 & 57.00 & 52.34 \\
\method{}-5 & 2.33  & 62.81 & 63.82 & 54.86 & 30.00 & 48.07 & 64.81 & 43.34 & 55.40 & 52.89 \textcolor{color3}{+0.55} \\
\midrule
\multicolumn{11}{c}{Qwen2 - 170k} \\
\midrule
NoLA & 0.05  & 55.99 & 52.50 & 55.07 & 23.74 & 50.51 & 55.64 & 38.91 & 46.80 & 47.40 \\
VeRA1024 & 0.06 & 55.50 & 59.30 & 52.81 & 34.52 & 52.72 & 58.55 & 42.94 & 57.80 & 51.78 \\
LoRA-16 & 1.18 & 53.39 & 68.12 & 66.33 & 46.46 & 58.72 & 59.97 & 43.77 & 62.20 & 57.37 \\
\method{}-10 & 1.18  & 61.47 & 67.63 & 65.61 & 40.26 & 57.22 & 62.12 & 47.95 & 59.60 & 57.73 \textcolor{color3}{+0.36} \\
LoRA-32 & 2.33  & 55.78 & 68.28 & 67.20 & 42.37 & 60.22 & 61.03 & 45.05 & 58.80 & 57.34 \\
\method{}-5 & 2.33  & 63.46 & 65.72 & 66.43 & 42.90 & 56.20 & 61.49 & 47.53 & 59.20 & 57.86 \textcolor{color3}{+0.52} \\
\midrule
\multicolumn{11}{c}{Phi3 - Zero-shot} \\
\midrule
Zero-shot & 0 & 62.26 & 79.82 & 65.81 & 56.29 & 19.89 & 89.86 & 77.65 & 71.40 & 65.37 \\
\midrule
\multicolumn{11}{c}{Phi3 - 15k} \\
\midrule
NoLA & 0.005 & 66.24 & 85.15 & 73.49 & 78.29 & 73.95 & 95.33 & 85.15 & 85.20 & 80.35 \\
VeRA1024 & 0.015 & 68.53 & 84.49 & 73.08 & 74.54 & 72.85 & 93.01 & 80.97 & 81.60 & 78.63 \\
LoRA-16 & 0.57  & 69.51 & 85.36 & 75.44 & 80.15 & 75.85 & 95.37 & 86.09 & 86.60 & 81.80 \\
\method{}-40 & 0.58 & 69.54 & 85.31 & 73.80 & 84.05 & 75.14 & 94.65 & 84.90 & 85.80 & 81.65 \textcolor{red}{-0.15} \\
LoRA-32 & 1.14 & 68.44 & 85.31 & 74.67 & 72.14 & 74.98 & 95.20 & 85.41 & 86.60 & 80.34 \\
\method{}-20 &  1.16 & 69.20 & 85.42 & 75.33 & 83.98 & 75.77 & 95.50 & 85.92 & 87.60 & 82.33 \textcolor{color3}{+1.99}\\
LoRA-64 & 2.28  & 69.88 & 85.75 & 74.97 & 74.45 & 75.30 & 95.54 & 87.12 & 88.00 & 81.37 \\
\method{}-10 & 2.29 & 69.63 & 85.31 & 75.03 & 86.94 & 75.30 & 95.24 & 85.58 & 86.40 & 82.43 \textcolor{color3}{+1.06}\\
\midrule
\multicolumn{11}{c}{Phi3 - 170k} \\
\midrule
NoLA & 0.005 & 68.87 & 85.15 & 77.18 & 85.13 & 77.90 & 95.20 & 85.58 & 83.60 & 82.33 \\
VeRA1024 & 0.015 & 69.53 & 84.53 & 74.52 & 84.08 & 76.82 & 94.51 & 83.68 & 83.54 & 81.40 \\
LoRA-16 & 0.57  & 70.83 & 84.39 & 78.45 & 89.94 & 82.87 & 95.45 & 86.09 & 89.00 & 84.63 \\
\method{}-40 & 0.58 & 70.86 & 86.67 & 78.81 & 90.07 & 82.00 & 95.12 & 86.26 & 87.60 & 84.67 \textcolor{color3}{+0.04}\\
LoRA-32 & 1.14  & 71.23 & 85.96 & 78.92 & 91.77 & 82.95 & 94.61 & 84.81 & 89.40 & 84.96 \\
\method{}-20 & 1.16 & 71.62 & 87.43 & 79.48 & 91.48 & 82.79 & 95.16 & 86.01 & 87.80 & 85.22 \textcolor{color3}{+0.26}\\
LoRA-64 & 2.28  & 71.93 & 86.13 & 79.58 & 90.14 & 83.74 & 92.68 & 81.74 & 87.80 & 84.22 \\
\method{}-10 & 2.29 & 71.87 & 86.56 & 79.43 & 90.99 & 82.72 & 95.66 & 85.49 & 87.40 & 85.01 \textcolor{color3}{+0.79}\\

\midrule
\multicolumn{11}{c}{LLama3 - Zero-shot} \\
\midrule
Zero-shot & 0 & 60.73 & 41.40 & 28.40 & 25.00 & 10.97 & 16.41 & 15.96 & 16.80 & 26.96 \\
\midrule
\multicolumn{11}{c}{LLama3 - 15k} \\
\midrule
NoLA &  0.004 & 67.58 & 84.49 & 72.31 & 69.60 & 70.56 & 90.49 & 78.75 & 81.20 & 76.87 \\
VeRA1024 &  0.014 & 63.36 & 84.39 & 74.10 & 77.70 & 71.35 & 89.48 & 76.54 & 80.20 & 77.14 \\
LoRA-16 & 0.35  & 73.03 & 86.94 & 75.90 & 90.53 & 77.74 & 90.74 & 80.29 & 86.20 & 82.67 \\
\method{}-60 & 0.36  & 71.19 & 84.22 & 75.59 & 83.82 & 74.98 & 91.12 & 81.31 & 86.00 & 81.03 \textcolor{red}{-1.64}\\
LoRA-32 & 0.7  & 74.22 & 86.40 & 75.79 & 91.90 & 77.35 & 90.61 & 80.80 & 87.60 & 83.09 \\
\method{}-30 & 0.7  & 71.65 & 83.79 & 74.56 & 86.85 & 75.61 & 90.78 & 80.03 & 87.20 & 81.31 \textcolor{red}{-1.78} \\ 
LoRA-64 & 1.4  & 71.77 & 84.17 & 76.25 & 85.14 & 73.80 & 91.46 & 80.80 & 86.20 & 81.20 \\
\method{}-15 & 1.4 & 70.98 & 86.02 & 75.44 & 89.74 & 76.80 & 91.29 & 81.66 & 83.80 & 81.96 \textcolor{color3}{+0.76}\\
\midrule
\multicolumn{11}{c}{LLama3 - 170k} \\
\midrule
NoLA & 0.004 & 71.83 & 84.66 & 77.79 & 85.05 & 82.72 & 88.59 & 76.45 & 82.20 & 81.16 \\
VeRA1024 & 0.014 & 70.55 & 85.69 & 79.27 & 92.14 & 82.64 & 87.33 & 73.38 & 82.20 & 81.65 \\
LoRA-16 & 0.35  & 75.14 & 89.12 & 80.66 & 89.01 & 86.58 & 90.07 & 78.75 & 86.20 & 84.44 \\
\method{}-60 & 0.35 & 75.26 & 87.98 & 79.63 & 94.66 & 85.64 & 90.03 & 79.44 & 84.40 & 84.62 \textcolor{color3}{+0.18}\\
LoRA-32  & 0.7   & 75.08 & 88.85 & 80.25 & 95.42 & 86.19 & 90.28 & 80.29 & 85.60 & 85.24 \\
\method{}-30 & 0.7 & 76.33 & 88.08 & 80.25 & 95.67 & 86.11 & 90.36 & 80.89 & 87.00 & 85.59 \textcolor{color3}{+0.45} \\
LoRA-64  & 1.4 & 74.65 & 89.66 & 80.86 & 95.17 & 86.74 & 90.95 & 79.18 & 85.40 & 85.33 \\
\method{}-15 & 1.4 & 72.63 & 87.98 & 81.37 & 95.68 & 87.77 & 91.33 & 80.89 & 89.00 & 85.83 \textcolor{color3}{+0.50} \\

\bottomrule
\end{tabular}}
\caption{Comparison of accuracy on commonsense reasoning datasets. We report accuracy delta of \method{} with LoRA for comparable amounts of trainable parameters.}
\label{tab:comsense}
\end{table}

\end{document}